
\typeout{Title}


\documentclass{article}
\pdfpagewidth=8.5in
\pdfpageheight=11in
\usepackage{ijcai21}

\usepackage{times}

\usepackage{soul}
\usepackage{url}
\usepackage[hidelinks]{hyperref}
\usepackage[utf8]{inputenc}
\usepackage[small]{caption}
\usepackage{graphicx}
\usepackage{amsmath}
\usepackage{booktabs}
\usepackage{nicefrac}
\urlstyle{same}

\usepackage{amsthm}
\usepackage{amssymb}
\usepackage{xcolor}
\usepackage[ruled,linesnumbered]{algorithm2e}

\usepackage{enumitem}
\usepackage{subcaption}

\newtheorem{theorem}{Theorem}
\newtheorem{proposition}{Proposition}

\newtheorem{lemma}{Lemma}

\newtheorem{example}{Example}
\theoremstyle{remark}
\newtheorem*{remark}{Remark}





\title{Online Selection of Diverse Committees\footnote{A short version of this paper appears in the Proceedings of IJCAI-2021.}}


\author{
Virginie Do$^{1,2}$\footnote{Contact Author: virginie.do@dauphine.eu}\and
Jamal Atif$^1$\and
Jérôme Lang$^1$\And
Nicolas Usunier$^2$\\
\affiliations
$^1$LAMSADE, Université PSL, Université Paris-Dauphine, CNRS, France\\
$^2$Facebook AI Research}

\usepackage{bm}
\usepackage{xspace}
\usepackage{dsfont}

\newcommand{\expe}[2]{\mathbb{E}_{#1}\left[{#2}\right]}

\newcommand{\pr}[2]{\mathbb{P}_{#1}\left[{#2}\right]}

\newcommand{\ratio}{\lambda}
\newcommand{\target}{\rho}
\newcommand{\tol}{\epsilon}
\newcommand{\cstr}{\xi}
\newcommand{\reals}{\mathbb{R}}
\newcommand{\actS}{\mathcal{A}}

\newcommand{\optpi}{\pi^*}
\newcommand{\size}{D}

\newcommand{\nepi}{L} 
\newcommand{\iepi}{l} 
\newcommand{\tepi}{\tau}
\newcommand{\durepi}{E_\iepi}

\newcommand{\regret}{R}
\newcommand{\violate}{R^c}

\newcommand{\card}[1]{|#1|}
\newcommand{\intint}[1]{[#1]}
\newcommand{\indic}[1]{\mathds{1}_{\!\{#1\}}}
\newcommand{\candidateS}{\mathcal{X}}
\newcommand{\loss}[1]{\|\ratio(#1) - \target\|_\infty}
\newcommand{\ucb}{\overline{p}}
\newcommand{\lcb}{\underline{p}}
\newcommand{\est}{\hat{p}}
\newcommand{\varp}{\tilde{p}}
\newcommand{\rad}{\beta}

\newcommand{\ncand}{\card{\candidateS}}
\newcommand{\stateS}{\mathcal{X}}

\newcommand{\ceil}[1]{\lceil #1 \rceil}
\newcommand{\floor}[1]{\lfloor #1 \rfloor}
\newcommand{\abs}[1]{\left\vert #1 \right\vert}

\newcommand{\ALG}{ALG\xspace}
\newcommand{\greedyalg}{\texttt{Greedy}\xspace}
\newcommand{\statioalg}{\texttt{CMDP}\xspace}
\newcommand{\optalg}{\texttt{RL-CMDP}\xspace}
\newcommand{\optalgb}{\texttt{RL-CMDP-B}\xspace}

\newcommand{\sumepi}{\sum_{t\in \durepi}}
\newcommand{\supp}{\mathcal{S}}

\newcommand{\prepi}{\tilde{p}_\iepi}
\newcommand{\piepi}{\pi_\iepi}
\newcommand{\piepit}{\pi_{\iepi_t}}
\newcommand{\filtr}{\mathcal{F}}
\newcommand{\radepi}{\beta_\iepi}

\begin{document}

\maketitle
\begin{abstract}

Citizens' assemblies need to represent subpopulations according to their proportions in the general population. These large committees are often constructed in an online fashion by contacting people, asking for the demographic features of the volunteers, and deciding to include them or not. This raises a trade-off between the number of people contacted (and the incurring cost) and the representativeness of the committee. We study three methods, theoretically and experimentally: a greedy algorithm that includes volunteers as long as proportionality is not violated; a non-adaptive method that includes a volunteer with a probability depending only on their features, assuming that the joint feature distribution in the volunteer pool is known; and a reinforcement learning based approach when this distribution is not known a priori but learnt online.  
\end{abstract}

\section{Introduction}

Forming a representative committee consists in selecting a set of individuals, who agree to serve, in such a way that every part of the population, defined by specific features, is represented proportionally to its size. As a paradigmatic example, the Climate Assembly 
in the UK and the Citizens' Convention for Climate in France brought together 108 and 150 participants respectively, representing sociodemographic categories such as gender, age, education level, professional activity, residency, and location, in proportion to their importance in the wider society.
Beyond citizens' deliberative assemblies, proportional representation often has to be respected when forming an evaluation committee, selecting a diverse pool of students or employees, and so on.

Two key criteria for evaluating the committee formation process are the representativeness of the final selection 
and the
number of persons contacted (each of these incurring a cost).
The trade-off is that the higher the number of people contacted, the more proportional the resulting committee.

A first possibility is to use an offline strategy (as for the UK assembly):  invitations are sent to a large number of people (30,000), and the final group is selected among the pool of volunteers. An alternative setting which is common in hiring is to consider an online process: the decision-maker is given a stream of candidates and has to decide at each timestep whether or not to admit the candidate to the final committee. This work focuses on the latter setting.


A further difficulty is that the distribution of \emph{volunteers} is not necessarily known in advance. For example, although the target is to represent distinct age groups proportionally to their distribution in the wider population, it may be the case that older people are predominant among volunteers.

Multi-attribute proportional representation in committee selection in an off-line setting usually assumes full access to a finite (typically large) database of candidates. This assumption is impractical in a variety of real-world settings: first, the database does not exist beforehand and constructing it would require contacting many more people than necessary; second, in some domains, the decision to hire someone should be made immediately so that people don't change their mind 
in the meantime (which is typical 
in professional contexts).

An online strategy must achieve a good trade-off between sample complexity, i.e. the number of timesteps needed to construct a full committee, and the quality of the final committee, as measured by its distance to the target distribution. 


We focus on the online setting. We introduce a new model and offer three different strategies, which rely on different assumptions on the input (and the process). The {\em greedy} strategy selects volunteers
as long as their inclusion does not jeopardize the size and representation constraints; it does not assume any prior distribution on the volunteer pool. The {\em nonadaptive} strategy, based on constrained Markov decision processes, repeatedly chooses a random person, 
and decides whether to include or not a volonteer with a probability that depends only on their features; it assumes the joint distribution in the volunteer pool is known; it can be parallelised. Finally, the {\em reinforcement learning} strategy assumes this distribution is not known a priori but can be learnt online.

Which of these 
strategies are interesting depends on domain specificities. For each, we study bounds for expected quality and sample complexity, and perform 
experiments
using real data from the UK Citizens' Assembly on Brexit. 

The outline of the paper is as follows. 
We discuss related work in Section \ref{sec:related}, define the problem in Section \ref{sec:model}, define and study 
our three strategies 
in Sections  \ref{sec:greedy}, \ref{sec:cmdp} and \ref{sec:learn}, 
analyse our experiments in Section \ref{sec:expes} and conclude in Section \ref{sec:conclu}.

\section{Related work} \label{sec:related}

\paragraph{Diversity and representation in committee (s)election}

The problem of selecting a diverse set of candidates 
from a 
candidate database, where each candidate is described by a vector of attribute values, has been considered in several places. In \cite{lang2018multi}, the goal is to find a committee of a fixed size whose distribution of attribute values is as close as possible to a given target distribution.
In \cite{celis2018multiwinner,bredereck2018multiwinner}, each candidate has a score, obtained from a set of votes, and some constraints on the proportion of selected candidates with a given attribute value are specified; the goal is to find a fixed-size committee of maximal score satisfying the constraints. In the same vein, \cite{Aziz19} considers soft constraints, and \cite{BeiLPW20} do not require the  size of the committee to be fixed.\footnote{
Note that {\em diversity} and {\em proportional representation} are often used with a  
different meaning in multiwinner elections, namely, in the sense that each voter should feel represented in an elected committee, regardless of attributes. A good entry to this literature is the survey \cite{faliszewski2017multiwinner}.
}

Our online setting shifts the difficulty of the multi-attribute representation problem from computational complexity analyses, to the need for probabilistic guarantees on the tradeoffs between sample complexity and achieved proportionality.

\paragraph{Representative and fair sortition}

Finding a representative committee (typically, a panel of citizens) with respect to a set of attributes, using {\em sortition}, is the topic of at least two recent papers. \cite{benade2019no} show that stratification (random selection from small subgroups defined by attribute values, rather than from the larger group) only helps marginally. 
\cite{FlaniganGGP20} go further and consider this three-stage selection process: (1) letters are sent to a large number of random individuals (the {\em recipients}); (2) these recipients answer whether they agree to participate, and if so, give their features; those individuals constitute the {\em pool}; (3) a sampling algorithm is used to select the final {\em panel} from the pool. As the probability of willingness to participate is different across demographic groups, each person is selected with a probability that depends on their features, so as to correct this self-selection bias. This guarantees that the whole process be fair to all individuals of the population, with respect of going from the initial population to the panel.\footnote{Fairness guarantees are pushed further in following (yet unpublished) work by the authors: see \url{https://youtu.be/x_1Ce1kT7vc}.}

The main differences between this work and ours are: (1) (once again)  
our process is online; (2) we do not consider individual fairness, only group representativeness;
(3) we care about minimizing the number of people contacted.
Moreover, unlike off-line processes, our process can be applied in contexts where hiring a person just  interviewed cannot be delayed; this may not be crucial for citizens' assemblies (although someone who volunteers at first contact may change their mind if the delay until the final selection is long), but this is definitely so when hiring a diverse 
team of employees. 




\paragraph{Online selection problems} 

Generalized secretary problems \cite{babaioff2008online} are 
optimal stopping problems where the goal is to hire the best possible subset of persons, assuming that persons arrive one at a time, their value is observed at that time,
and the decision to hire or not them must be taken immediately. The problem has been generalized to finding a set of items maximizing a submodular value function \cite{bateni2013submodular,badanidiyuru2014streaming}
While the latter models do not deal with diversity constraints, 
\cite{stoyanovich2018online} aims at selecting a group of people arriving in a streaming fashion from a finite pool, with the goal of optimizing their overall quality subject to diversity constraints. The common point with our approach is the online nature of the selection process.
The main differences are that they consider only one attribute, the size of the pool is known, and yet more importantly, what is optimized is the intrinsic quality values of the candidates and not the number of persons interviewed.
Closer to our setting is \cite{panigrahi2012online} who consider diversity along multiple features in online selection of search results, regardless of item quality. 
They only seek to maximise diversity, and do not consider trade-offs with the number of items observed. 

The diverse hiring setting of \cite{schumann2019diverse} is very different. At each time step, the decision-maker chooses which candidate to interview and only decides on which subset to hire after multiple rounds, whereas in our setting, candidates arrive one by one and decisions are made immediately.

\section{Formal setting}\label{sec:model}

\subsection{Problem definition}
Let $\candidateS = \candidateS_1 \times ... \times \candidateS_d$ be the product space of $d$ finite domains, each of size $\size_i = \card{\candidateS_i}$, and where we identify $\candidateS_i$ with $\intint{\size_i}= \{1, ..., \size_i\}$. Each candidate is represented by a \emph{characteristic vector} $x \in \candidateS$ with $d$ \emph{features}. Let $x^i \in \candidateS_i$ denote the value of the $i$-th feature. For each $i \in \intint{d}$, we consider a \emph{target vector} $\target^i \in (0,1)^{\size_i}$ with $\sum_{j=1}^{\size_i}\target_j^i = 1.$

The candidate database is infinite and the horizon as well. At each timestep $t \geq 1$, the agent observes a candidate $x_t$ drawn i.i.d.\ from a
stationary distribution $p$ over $\candidateS$, i.e. $x_t \sim p$. The decision-maker must immediately decide between two actions: \emph{accept} or \emph{reject} the candidate, which we respectively denote as $a_t=1$ and $a_t=0$.

The goal 
is to select a \emph{committee} $C$ of $K$ candidates that matches the target vectors as closely as possible, while minimizing the number of candidates screened.

For some set $C$, let $\ratio(C) \in \prod_{i=1}^d[0,1]^{\size_i}$ be the \emph{representation profile} of $C$, where $\ratio_j^i(C) = \frac{\card{\{x \in C: x^i = j\}}}{\card{C}}$. We define the \emph{representation loss} as $\|\ratio(C) - \target\|_\infty = \max_{i \in \intint{d}, j \in \intint{\size_i}} \vert\ratio_j^i(C) - \target_j^i\vert$. We evaluate how much $C$ matches the target $\target$ by the $\ell_\infty$ metric, because it is harsher than $\ell_1, \ell_2$ on committees that are unacceptable in our applications (e.g. committees with no women that achieve perfect representation on all other categories than gender).

Let $C_t = \{x_{t'}: t' \leq t, a_{t'} = 1\}$ denote the set of all accepted candidates at the end of step $t$. The agent stops at $\tau$, where $\tau$ is the first time when $K$ candidates have been accepted, i.e. the total number of candidates screened. The agent following a (possibly randomized) algorithm $\ALG$ must minimize the \emph{sample complexity} $\mathbb{E}^{p, \ALG}[\tau]$.
 
Importantly, we consider two settings: whether the candidate distribution $p$ is \emph{known} or \emph{unknown}.

\begin{remark}
In this model, we simply ignore non-volunteers, since the agent only needs to make decisions for volunteers, which from now on we call \emph{candidates}. The joint distribution of characteristic vectors in the population of candidates is $p$.
\end{remark}

\subsection{Greedy strategy}\label{sec:greedy}

We describe a first simple strategy. In \greedyalg, the agent greedily accepts any candidate as long as the number of people in the committee with $x^i = j$ does not exceed the quota $\ceil{\target_j^i K} + \frac{\tol K}{(\size_i - 1)}$  for any $i,j$, where $\tol > 0$ is some tolerance parameter for the representation quality. 

\begin{proposition}\label{prop:greedy-loss}
The representation loss incurred by \greedyalg is bounded as follows:
\begin{align*}
    \loss{C_\tau} \mathop{\leq}_{a.s.} \big(\frac{\max_{i\in[d]}\size_i - 1}{K} + \tol).
\end{align*}
\end{proposition}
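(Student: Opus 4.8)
The plan is to treat the bound as a purely combinatorial fact about the completed committee $C_\tau$, which by definition of $\tau$ has exactly $K$ members. Writing $n_j^i = \card{\{x \in C_\tau : x^i = j\}}$, we have $\ratio_j^i(C_\tau) = n_j^i/K$ and the conservation identity $\sum_{j=1}^{\size_i} n_j^i = K$ for every feature $i$. It suffices to bound $\abs{\ratio_j^i(C_\tau) - \target_j^i}$ by $\frac{\size_i - 1}{K} + \tol$ for each fixed $i,j$ and then take the maximum, since $\size_i - 1 \le \max_{i}\size_i - 1$. I would handle the two sides of the absolute value separately.

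For the upper side, the acceptance rule of \greedyalg directly gives the deterministic inequality $n_j^i \le \ceil{\target_j^i K} + \frac{\tol K}{\size_i - 1}$, because a candidate with $x^i = j$ is admitted only when this quota would not be exceeded. Dividing by $K$ and using $\ceil{\target_j^i K} < \target_j^i K + 1$ yields $\ratio_j^i(C_\tau) - \target_j^i < \frac1K + \frac{\tol}{\size_i - 1}$, which is at most $\frac{\size_i - 1}{K} + \tol$ (using $\size_i \ge 2$; the case $\size_i = 1$ is trivial).

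The lower side is where the actual work lies, since the greedy rule only ever enforces \emph{upper} quotas, so underrepresentation of a value $j$ must be controlled indirectly. I would invoke the conservation identity in the form $n_j^i = K - \sum_{j' \neq j} n_{j'}^i$ and bound each of the $\size_i - 1$ competing counts by its own quota: this gives $n_j^i \ge K - \sum_{j' \neq j}\ceil{\target_{j'}^i K} - (\size_i - 1)\frac{\tol K}{\size_i - 1} = K - \sum_{j'\neq j}\ceil{\target_{j'}^i K} - \tol K$. Then $\ceil{\target_{j'}^i K} < \target_{j'}^i K + 1$, summed over the $\size_i - 1$ values $j' \neq j$, produces $\sum_{j'\neq j}\ceil{\target_{j'}^i K} < (1-\target_j^i)K + (\size_i - 1)$, hence $n_j^i > \target_j^i K - (\size_i - 1) - \tol K$, i.e. $\target_j^i - \ratio_j^i(C_\tau) < \frac{\size_i - 1}{K} + \tol$. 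Combining the two sides and maximizing over $i,j$ finishes the argument.

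The only nonroutine point is this last step: recognizing that the integrality slack of at most $1$ incurred for each of the $\size_i - 1$ competing values is exactly what accumulates into the $\frac{\size_i - 1}{K}$ term, and that the $\size_i - 1$ copies of $\frac{\tol K}{\size_i - 1}$ telescope to the clean $\tol K$. One should also note that the bound is a deterministic property of any size-$K$ committee the rule completes — which is the meaning of the ``$\leq_{a.s.}$'' in the statement — and that completion is not an issue since the per-feature quotas are jointly loose, $\sum_j\big(\ceil{\target_j^i K} + \frac{\tol K}{\size_i - 1}\big) > K$.
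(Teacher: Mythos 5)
Your proposal is correct and follows essentially the same route as the paper's proof: the upper side comes directly from the quota in the acceptance rule, and the lower side from the conservation identity (proportions summing to one) combined with the quota bounds on the $\size_i-1$ competing values — the paper just phrases it in proportions rather than counts. Your closing aside about completion is slightly loose (per-feature quota slack alone does not guarantee that a jointly admissible candidate type has positive probability), but it is immaterial to the stated bound, which concerns the completed committee $C_\tau$ of size $K$.
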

The proof and pseudocode are included in App. \ref{app:greedy}.

This method is simple to interpret and implement, and can even be used when the candidate distribution $p$ is unknown. However, in the following example, we see that \greedyalg may be inefficient because it requires interacting with an arbitrarily large number of candidates to recruit a full committee. 


\begin{table}[]
\centering
\begin{tabular}{|l|ll|}
\hline
gender \textbackslash{} age & S & J \\
\hline
M & $\nicefrac12-\epsilon'$ & $\nicefrac14$ \\
F & $\nicefrac14$ & $\epsilon'$\\
\hline
\end{tabular}
\caption{Example candidate distribution $p$ with 2 binary features.}
\label{tab:example-dist}
\end{table}

\begin{example}\label{ex:small-committee}
Let $\epsilon' >0, \ll 1$. There are 2 binary features, \emph{gender} and \emph{age}, with domains $\candidateS_{\text{gender}}=\{M,F\}$ 
and $\candidateS_{\text{age}}=\{S,J\}$.
The candidates are distributed as $p$ given in Table \ref{tab:example-dist}. We want a committee of size $K = 4$ (e.g., a thesis committee) and the target is $\target^{\text{gender}}=(\nicefrac{1}{2}, \nicefrac{1}{2})$ and $\target^{\text{age}} = (\nicefrac{3}{4}, \nicefrac{1}{4})$.

Let $A$ be the event that in the first $3$ timesteps, the agent observes candidates with characteristic vectors $\{FS, MS, MS\}$ in any order. Then \greedyalg accepts all of them, i.e. $A = \left\{C_3 = \{FS, MS, MS\}\right\}$. We have: $\pr{}{A} = \nicefrac{1}{4} (\nicefrac{1}{2} - \epsilon')^2\times 3!  = \nicefrac{3}{2} (\nicefrac{1}{2} - \epsilon')^2 \geq \nicefrac{3}{2} \big(\nicefrac{1}{3}\big)^2 = \nicefrac16$.

Under event $A$, \greedyalg can only stop upon finding $FJ$ in order to satisfy the representation constraints. Therefore, $\tau | A$ follows a geometric distribution with success probability $\epsilon'$, hence its expectation is $\nicefrac{1}{\epsilon'}$, and
$\mathbb{E}^{p,\greedyalg}[\tau] \geq \expe{}{\tau|A}\times\pr{}{A} = \nicefrac{1}{6\epsilon'}.$
Therefore, the sample complexity of \greedyalg in this example is arbitrarily large. 
\end{example}

This example shows the limits of directly applying a naive strategy to our online selection problem, where the difficulty arises from considering multiple features simultaneously, even when there are only $2$ binary features. We further discuss the strengths and weaknesses of \greedyalg, and its sensitivity to the tolerance $\tol$ in our experiments in Section \ref{sec:expes}.

The greedy strategy is adaptive, in the sense that decisions are made based on the current candidate and candidates accepted in the past. In the following section, we present, with theoretical guarantees, an efficient yet non-adaptive algorithm based on constrained MDPs for the setting in which the candidate distribution is known. We then adapt this approach to the case when this distribution is unknown, using techniques for efficient exploration / exploitation in constrained MDPs relying on the principle of optimism in the face of uncertainty.

\section{$p$ is known: constrained MDP strategy} \label{sec:cmdp}

In this section, we assume the 
distribution $p$ is known, and we place ourselves in the limit where we would select a committee of infinite size, and aim to maximize the rate at which candidates are selected, under the constraint that the proportion of accepted candidates per feature value is controlled by $\target$. One advantage of this approximation is that the optimal policy is stationary, thus simple to represent. Moreover, 
as stationary policies can be very well 
parallelized, in the case where 
multiple candidates can be interviewed simultaneously. To apply this approach to the finite-size committee selection problem, one needs to interrupt the agent when $K$ candidates have been selected. We showcase a high probability bound of $O(\sqrt{1/K})$ on the representation loss, which guarantees that for large enough values of $K$, the resulting committee is representative. 

From now on, we assume that any feature vector can be observed, i.e., $p(x) > 0$ for all $x$, so that proportional representation constraints can be satisfied.

\subsection{Our model} 
Fundamentally, our problem could be seen as a contextual bandit with stochastic contexts $x_t \sim p$ and two actions $a_t = 0$ or $1$. However, the type of constraints incurred by proportional representation are well studied in constrained MDPs (CMDPs) \cite{altman1999constrained}, whereas the contextual bandits literature focused on other constraints (e.g., knapsack constraints \cite{agrawal2016linear}). We show how we can efficiently leverage the CMDP framework for our online committee selection problem.

Formally, we introduce an MDP $M=(\candidateS, \actS, P, r)$, where the set of states is the $d$-dimensional candidate space $\candidateS$, the set of actions is $\actS = \{0, 1\}$, and the (deterministic) reward is $r(x,a) = \indic{a=1}$. 
The transition kernel $P$, which defines the probability to be in state $x'$ given that the previous state was $x$ and the agent took action $a$, is very simple in our case: we simply have $P(x'|x, a) = p(x')$ since candidates are drawn i.i.d regardless of the previous actions and candidates.

We consider the \emph{average reward} setting in which the performance of a policy $\pi:\stateS \times \actS \rightarrow [0, 1]$ is measured by its \emph{gain} $g^{p,\pi}$, defined as:
\begin{align*}
g^{p,\pi}(x) = \lim_{T \rightarrow \infty} \frac{1}{T} \mathbb{E}^{p,\pi}\left[\sum_{t=1}^T r(x_t, a_t) \bigg\vert x_1 = x\right].
\end{align*}
We simply write $g^{p,\pi} := g^{\pi}$ when the underlying transition is $p$ without ambiguity.

We include proportional representation constraints following the framework of CMDPs, where the set of allowed policies is restricted by a set of additional constraints specified by reward functions. In our case, for $i \in \intint{d}, j \in \intint{\size_i}$, we introduce $r_j^i(x, a) = \indic{x^i=j, a=1}$, and let $\cstr_j^i = r_j^i - \target_j^i r$ be the reward function for the constraint indexed by $i,j$. Similarly to the gain, we define ${h_j^i}^\pi = \lim_{T \rightarrow \infty} \frac{1}{T} \mathbb{E}^\pi\left[\sum_{t=1}^T \cstr_j^i(x_t, a_t)\right]$.
The CMDP is defined by:
\begin{align}\label{eq:CMDP-opt1}
    \max_{\pi} \{g^\pi \,| \,\forall i \in \intint{d}, \forall j \in \intint{\size_i}, {h_j^i}^\pi = 0 \}.
\end{align}

Given the simplicity of the transition kernel, and since the MDP is ergodic by the assumption $p > 0$, the gain is constant, i.e. $\forall x \in \candidateS, g^\pi(x) = g^\pi$, and problem \eqref{eq:CMDP-opt1} is well defined. From now on, we only write $g^\pi$ and ${\cstr_j^i}^\pi$. Moreover, the optimal policy for the CMDP \eqref{eq:CMDP-opt1} is denoted $\optpi$ and is \emph{stationary} \cite{altman1999constrained}.

\begin{lemma}\label{lem:constraint2target}
$g^\pi$ is the \emph{selection rate} under policy $\pi$: 
\begin{align*}
    g^\pi = \sum_{x}p(x)\pi(x,1) = \mathbb{P}^{p,\pi}[a=1]
\end{align*}

Moreover, if $\pi$ is feasible for CMDP \eqref{eq:CMDP-opt1}, then:  
\begin{align*}\forall i \in [d], \forall j \in \intint{\size_i}, \mathbb{P}^{p,\pi}[x^i=j | a=1] =\target_j^i.
\end{align*}
\end{lemma}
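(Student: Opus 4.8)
The plan is to exploit the degenerate transition kernel $P(x'\mid x,a)=p(x')$: under any \emph{stationary} policy $\pi$, the observed process is, from the first (resp.\ second) step on, an i.i.d.\ sequence $x_t\sim p$ with action $a_t$ sampled from $\pi(x_t,\cdot)$ depending on $x_t$ only. Hence for every $t$ (ignoring at most one boundary term coming from conditioning on a fixed $x_1$) we have $\mathbb{E}^{p,\pi}[r(x_t,a_t)] = \mathbb{P}^{p,\pi}[a_t=1] = \sum_x p(x)\pi(x,1)$, a constant independent of $t$ because $\pi$ is stationary and all $x_t$ have the same law $p$.

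First I would substitute this into the definition of the gain: $g^\pi(x)=\lim_{T\to\infty}\frac1T\mathbb{E}^{p,\pi}[\sum_{t=1}^T r(x_t,a_t)\mid x_1=x] = \lim_{T\to\infty}\frac1T\big(\pi(x,1)+(T-1)\sum_{x'}p(x')\pi(x',1)\big)=\sum_{x'}p(x')\pi(x',1)$, which is independent of the start state $x$ (consistent with the ergodicity already noted) and is, by definition of drawing a fresh candidate from $p$ and acting with $\pi$, exactly $\mathbb{P}^{p,\pi}[a=1]$. This gives the first identity.

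Next I would repeat the computation verbatim for the constraint reward $\cstr_j^i=r_j^i-\target_j^i r$. Since $r_j^i(x_t,a_t)=\indic{x_t^i=j,\,a_t=1}$, the same Cesàro limit yields ${\cstr_j^i}^\pi = \mathbb{P}^{p,\pi}[x^i=j,\,a=1]-\target_j^i\,\mathbb{P}^{p,\pi}[a=1]$. Feasibility of $\pi$ means ${\cstr_j^i}^\pi=0$ for all $i,j$, i.e.\ $\mathbb{P}^{p,\pi}[x^i=j,\,a=1]=\target_j^i\,\mathbb{P}^{p,\pi}[a=1]$; dividing by $\mathbb{P}^{p,\pi}[a=1]=g^\pi$ then gives $\mathbb{P}^{p,\pi}[x^i=j\mid a=1]=\target_j^i$.

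The only mildly delicate points, neither of which needs real work, are: (i) the existence of the $\tfrac1T$-limits and the fact that the at-most-one boundary term from conditioning on $x_1$ is asymptotically negligible — immediate since all rewards lie in $[0,1]$ and the per-step expectation is constant for $t\ge 2$; and (ii) legitimacy of the conditioning in the second claim — one notes that $g^\pi=\mathbb{P}^{p,\pi}[a=1]>0$ for any $\pi$ that accepts some feature vector with positive probability (the only policies of interest, since otherwise the committee is never completed), so $\{a=1\}$ has positive probability and the conditional probability is well defined. I therefore expect the argument to be short once the i.i.d.\ reduction is made explicit.
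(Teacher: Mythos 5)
Your proposal is correct and follows essentially the same route as the paper's proof: both reduce the gains $g^\pi$ and ${h_j^i}^\pi$ to single-step expectations under $x\sim p$, $a\sim\pi(\cdot|x)$ (exploiting the i.i.d.\ transition kernel), identify them with $\mathbb{P}^{p,\pi}[a=1]$ and $\mathbb{P}^{p,\pi}[x^i=j,a=1]-\target_j^i\,\mathbb{P}^{p,\pi}[a=1]$, and divide using feasibility. You are merely more explicit than the paper about the Cesàro-limit computation and about the positivity of $g^\pi$ needed for the conditional probability to be well defined, which is a welcome but not essential addition.
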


Lemma \ref{lem:constraint2target} implies that (a) $\pi^*$ maximises the selection rate of candidates, and (b) the constraints of \eqref{eq:CMDP-opt1} force candidates $x$ with $x^i = j$ to be accepted in proportions given by $\target_j^i$.

The CMDP can be expressed as the 
linear program: 
\begin{equation}\label{eq:CMDP-opt2}
\begin{aligned}
\max_{\pi \in \reals_+^{\candidateS \times \actS}} \quad & \sum_{x,a}\pi(x,a)p(x) r(x,a)  \\
\text{u.c.} \quad & \forall x\in \candidateS, \sum_{a} \pi(x,a)  = 1 \\
& \forall i,j , \sum_{x,a}\pi(x,a)p(x) \cstr_j^i(x,a) = 0.
\end{aligned}
\end{equation}

Notice that problem \eqref{eq:CMDP-opt2} is feasible by the assumption that $\forall x\in \candidateS, p(x) > 0$. Next we study how well the proportional selection along features is respected when we shift from infinite to finite-sized committee selection.

\subsection{Theoretical guarantees}

We analyze the \texttt{CMDP-based} strategy where at each timestep, the agent observes candidates $x_t \sim p$, decides to accept $x_t$ by playing $a_t \sim \optpi(.|x_t)$ and stops when $K$ candidates have been accepted. We later refer to it as \statioalg for brevity.

First, we formally relate the gain $g^\pi$ that we optimize for in \eqref{eq:CMDP-opt1} to the quantity of interest $\mathbb{E}^{p,\pi}[\tau]$.
\begin{lemma}\label{lem:gain2time}
For any stationary policy $\pi$, $\mathbb{E}^{p,\pi}[\tau] = \frac{K}{g^\pi}$.
\end{lemma}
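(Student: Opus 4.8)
The plan is to recognize that $\tau$ is a stopping time that counts how many i.i.d.\ Bernoulli-type trials are needed to accumulate $K$ successes, and then apply Wald's identity. First I would make precise the sequence of acceptance indicators: for each $t \geq 1$, let $Y_t = a_t = \indic{\text{candidate } x_t \text{ is accepted}}$. Since $x_t \sim p$ i.i.d.\ and $a_t \sim \pi(\cdot \mid x_t)$ with $\pi$ \emph{stationary} (not depending on the history), the variables $(Y_t)_{t \geq 1}$ are i.i.d.\ Bernoulli with parameter $\mathbb{E}[Y_t] = \sum_x p(x)\pi(x,1) = g^\pi$, where the last equality is exactly the first part of Lemma \ref{lem:constraint2target}. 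By definition, $\tau = \inf\{t : \sum_{t'=1}^t Y_{t'} = K\}$.

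Next I would verify the hypotheses of Wald's identity. The stopping time $\tau$ is adapted to the natural filtration generated by $(x_t, a_t)_{t}$, and $Y_{t+1}$ is independent of that filtration up to time $t$; we also need $\mathbb{E}[\tau] < \infty$, which holds because $\tau$ is stochastically dominated by a sum of $K$ i.i.d.\ geometric random variables with success probability $g^\pi > 0$ (here we use the standing assumption $p(x) > 0$ and feasibility, which forces $g^\pi > 0$). Wald's identity then gives
\begin{align*}
K = \mathbb{E}^{p,\pi}\!\left[\sum_{t=1}^\tau Y_t\right] = \mathbb{E}^{p,\pi}[\tau] \cdot \mathbb{E}[Y_1] = \mathbb{E}^{p,\pi}[\tau] \cdot g^\pi,
\end{align*}
and rearranging yields $\mathbb{E}^{p,\pi}[\tau] = K/g^\pi$.

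Alternatively, and perhaps even more transparently, one can avoid Wald entirely: $\tau$ is literally the sum of $K$ independent geometric random variables (the waiting time for the first acceptance, then the gap to the second, etc.), each with mean $1/g^\pi$ by the i.i.d.\ structure, so linearity of expectation immediately gives $\mathbb{E}^{p,\pi}[\tau] = K/g^\pi$. I would probably present this second argument as the main line since it is self-contained. The only real subtlety — and the step I would be most careful about — is justifying that the acceptance indicators are genuinely i.i.d., which relies crucially on $\pi$ being stationary: if $\pi$ depended on $C_{t-1}$ (as in \greedyalg), the $Y_t$ would be neither independent nor identically distributed, and indeed Example \ref{ex:small-committee} shows the conclusion fails there. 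Everything else is a routine computation with geometric distributions.
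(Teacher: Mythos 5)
Your proposal is correct and takes essentially the same route as the paper: the paper also reduces the claim to the fact that under a stationary policy the acceptance indicators are i.i.d.\ Bernoulli with parameter $g^\pi$ (precisely because the transition kernel makes the gain constant), so that $\tau$, the number of trials until $K$ acceptances, is negative binomial --- i.e.\ a sum of $K$ i.i.d.\ geometrics with mean $1/g^\pi$ --- yielding $\mathbb{E}^{p,\pi}[\tau]=K/g^\pi$. Your Wald's-identity variant is just an alternative wrapper around this same i.i.d.\ structure (with the harmless implicit caveat, shared by the paper, that $g^\pi>0$).
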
 
Lemma \ref{lem:gain2time} is a direct consequence of the fact that $\tau + K$ follows a negative binomial distribution with parameters $K$ and $1-g^\pi$, which are respectively the number of successes and the probability of failure, i.e. of rejecting a candidate under $\pi$. Note that this is only true because in our case the transition structure of the MDP ensures constant gain. A quick sanity check shows that if the agent systematically accepts all candidates, i.e. $g^\pi = 1$, then $\mathbb{E}^{p,\pi}[\tau] = K$, and that maximizing $g^\pi$ is equivalent to minimizing $\mathbb{E}^{p,\pi}[\tau]$. 

We exhibit a bound on the representation loss of \statioalg which follows the optimal stationary policy $\optpi$ of CMDP \eqref{eq:CMDP-opt1}.  
Let $\tilde{d} = \sum_{i=1}^d (D_i - 1).$ 
($\tilde{d} = d$ when all features are binary.)

\begin{proposition}\label{prop:hoeffding-loss}
Let $\pi^*$ be an optimal stationary policy for CMDP \eqref{eq:CMDP-opt1}. Let $\delta > 0$. Then, 
\vspace{-1mm}
\begin{align*}
    \mathbb{P}^{p,\pi^*}\left[\loss{C_\tau} \leq \sqrt{\frac{\log(\frac{2\tilde{d}}{\delta})}{2K}}\right] \geq 1 - \delta.
\end{align*}
\vspace{-1mm}
\end{proposition}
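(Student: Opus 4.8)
The plan is to use that the committee $C_\tepi$ returned by \statioalg is a set of exactly $K$ candidates that are i.i.d.\ draws from the law of a candidate conditioned on being accepted, and then to bound, for each value $j$ of each feature $i$, the deviation of the empirical frequency $\ratio_j^i(C_\tepi)$ from $\target_j^i$ by a coordinate-wise Hoeffding inequality followed by a union bound.

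First I would make precise that the accepted candidates form an i.i.d.\ sequence. The candidates $x_t$ are i.i.d.\ $\sim p$, and conditionally on $x_t$ the decision $a_t\sim\optpi(\cdot\,|\,x_t)$ is an independent $\mathrm{Bernoulli}(\optpi(x_t,1))$ trial; a standard thinning computation (the same negative-binomial structure used for Lemma \ref{lem:gain2time}) then shows that, writing $T_1<T_2<\dots$ for the successive acceptance times, the subsequence $(x_{T_k})_{k\ge 1}$ of accepted candidates is i.i.d.\ with law $q(x)=p(x)\optpi(x,1)/g^{\optpi}$, and moreover that this subsequence is independent of how many candidates are screened between consecutive acceptances. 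Because the feasible set of \eqref{eq:CMDP-opt1} contains a policy with strictly positive selection rate (using $p>0$ and $\target^i\in(0,1)^{\size_i}$), the optimum $\optpi$ satisfies $g^{\optpi}>0$, so $\tepi<\infty$ almost surely (Lemma \ref{lem:gain2time}) and $C_\tepi=\{x_{T_1},\dots,x_{T_K}\}$.

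Now, since $\optpi$ is feasible for \eqref{eq:CMDP-opt1}, Lemma \ref{lem:constraint2target} gives $\pr{y\sim q}{y^i=j}=\target_j^i$ for every $i\in\intint{d}$, $j\in\intint{\size_i}$. Hence, for fixed $i,j$, the variables $\indic{x_{T_k}^i=j}$ for $k=1,\dots,K$ are i.i.d.\ $\mathrm{Bernoulli}(\target_j^i)$ with empirical mean $\ratio_j^i(C_\tepi)=\frac1K\sum_{k=1}^K\indic{x_{T_k}^i=j}$, and Hoeffding's inequality yields $\pr{}{|\ratio_j^i(C_\tepi)-\target_j^i|\ge t}\le 2e^{-2Kt^2}$ for all $t>0$. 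Taking a union bound over the feature/value deviations — of which only $\size_i-1$ per feature are free since $\sum_j\ratio_j^i(C_\tepi)=\sum_j\target_j^i=1$ (exactly one two-sided event per feature when $\size_i=2$), for a total of $\tilde d=\sum_i(\size_i-1)$ events — and choosing $t$ so that $2\tilde d\,e^{-2Kt^2}=\delta$, i.e.\ $t=\sqrt{\log(2\tilde d/\delta)/(2K)}$, gives $\loss{C_\tepi}\le t$ with probability at least $1-\delta$, which is the claim.

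I expect the one genuinely delicate step to be the first: rigorously justifying that the first $K$ accepted candidates are i.i.d.\ $\sim q$ and, in particular, that conditioning on the random event ``$K$ candidates have been accepted'' introduces no bias into their joint law — i.e.\ that thinning an i.i.d.\ stream by independent keep-decisions yields an i.i.d.\ kept-subsequence that is independent of the stopping rule. Everything after that (the per-coordinate Hoeffding bound, and the union bound with the simplex-based bookkeeping of the number of events) is routine.
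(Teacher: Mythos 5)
Your proof is correct and follows essentially the same route as the paper's: Lemma \ref{lem:constraint2target} gives $\mathbb{P}[x^i=j\mid a=1]=\rho_j^i$, Hoeffding's inequality is applied to the $K$ accepted candidates for each pair $(i,j)$, and a union bound over the $\tilde d$ pairs with $\delta'=\delta/\tilde d$ yields the claim. The only difference is presentational: where you justify the i.i.d.\ structure of the accepted subsequence via a thinning argument, the paper conditions on the acceptance pattern $(a_1,\dots,a_T)$ with exactly $K$ ones ending at time $T$ and then sums over all such patterns --- the same delicate point you flagged, resolved equivalently.
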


All proofs of this section are available in Appendix \ref{sec:proof-cmdp}.

The upper bound on the representation loss of \statioalg decreases with the committee size in $\sqrt{1/K}$. This shows that the stationary policy $\optpi$ works well for larger committees, although it acts independently from previously accepted candidates. The intuition is that for larger committees, adding a candidate has less impact on the current representation vector. 
\begin{example}\label{ex:cmdp}
We take the same attributes and same distribution as in Table \ref{tab:example-dist}, with $\epsilon' = \nicefrac16$. Here, the target vectors are $\target^{\text{gender}} = (\nicefrac12,\nicefrac12)$ and $\target^{\text{age}} = (\nicefrac12,\nicefrac12)$: an ideal committee contains as many women as men, as many senior as junior. 

With the optimal policy for LP \eqref{eq:CMDP-opt2}, each time the current volunteer is a senior male, we select him with probability \nicefrac12; all other volunteers are selected with probability 1. The expected final composition of the pool is 30\% of junior male, 30\% of senior female, 20\% of junior female and 20\% of senior male. As the policy selects in average \nicefrac56 of the volunteers, the expected time until we select $K$ candidates is $\mathbb{E}^{p,\pi^*}[\tau] = (\nicefrac65) K$. More details can be found in App. \ref{appendix:cmdp}.
\end{example}

\section{$p$ is unknown: optimistic CMDP strategy} \label{sec:learn} 

\begin{algorithm}[t]
\caption{\optalg algorithm. \label{alg:opt}}
 \SetKwInOut{Input}{input}\SetKwInOut{Output}{output}
 \Input{confidence $\delta$, committee size $K$, targets $\target$}
 \Output{committee $C_\tau$}
 $t\gets 0$, $C_0 \gets \emptyset$\;
 \While{$\card{C_t} < K$ }{
 \For{episode $\iepi = 1, 2,...$}{
    $\tepi_\iepi = t + 1$\;
    $\pi_\iepi \gets$ sol. of \eqref{eq:OptCMDP-1} via the extended LP \eqref{eq:OptCMDP-2}\;
    \While{$n_t(x_t) < 2 n_{\tepi_\iepi - 1}(x_t)$}{
        $t \gets t + 1$, Execute $\pi_\iepi$\;
    }
 }
 }
\Return{$C_t$}
\end{algorithm}

We now tackle the committee selection problem when the candidate distribution $p$ is unknown and must be learned online. Let $g^* = g^{\pi^*}$ be the value of \eqref{eq:CMDP-opt1}, which is the optimal gain of the CMDP when the distribution $p$ is known. We evaluate a learning algorithm by:
\begin{enumerate}
    \item the performance regret: $\regret(T) = \sum_{t=1}^T (g^* - r(x_t, a_t))$,
    \item the cost of constraint violations:
    \begin{sloppypar}
    $\violate(T) =\max_{i,j} \big\vert\sum_{t=1}^T \xi_j^i(x_t, a_t)\big\vert$.
    \end{sloppypar}
\end{enumerate}


We propose an algorithm that we call \optalg (Reinforcement Learning in CMDP, Alg.~\ref{alg:opt}). It is an adaptation of the \emph{optimistic} algorithm UCRL2 \cite{jaksch2010near}, and it also builds on the algorithm OptCMDP proposed by \cite{efroni2020exploration}  for finite-horizon CMDPs. Learning in average-reward CMDPs involves different challenges, because there is no guarantee that the policy at each episode has constant gain. It does not matter in our case, since as we noted in Sec. \ref{sec:cmdp}, the simple structure of the transition kernel ensures constant gain, and does not require to use the Bellman equation. The few works on learning in average-reward CMDPs make unsuitable assumptions for our setting \cite{zheng2020constrained,singh2020learning}. 

\optalg proceeds in episodes, which end each time the number of observations for some candidate $x$ doubles. During each episode $\iepi$, observed candidates $x_t$ are accepted on the basis of a single stationary policy $\pi_\iepi$.

Let $\tepi_\iepi$ denote the start time of episode $\iepi$ and $\durepi = [\tepi_\iepi, \tepi_{\iepi+1}]$. Let $n_t(x) = \sum_{t'=1}^t \indic{x_{t'}=x}$ and $N(t) = \card{C_{t-1}} = \sum_{t'=1}^{t-1}\indic{a_{t'} = 1}$. Let $N_j^i(t) = \sum_{t'=1}^{t-1}\indic{x^i_{t'} =j, a_{t'} = 1}$ be the number of accepted candidates $x$ such that $x^i = j$ before $t$.


At each episode $l$, the algorithm estimates the true candidate distribution by the empirical distribution $\est_\iepi(x) = \frac{n_{\tepi_\iepi-1}(x)}{\tepi_\iepi-1}$ and maintains confidence sets $B_l$ on $p$. As in UCRL2, these are built using the inequality on the $\ell_1$-deviation of
$p$ and $\est_\iepi$ from \cite{weissman2003inequalities}:

\begin{lemma}\label{lem:l1-dev}
With probability $\geq 1-\frac{\delta}{3}$,
\begin{equation}\label{eq:L1confidence}
\begin{aligned}
    \|\hat{p}_\iepi - p\|_{1} \leq \sqrt{\frac{ 2\ncand\log\big(6\ncand \tepi_\iepi (\tepi_\iepi-1) / \delta\big)}{\tepi_\iepi -1}} := \rad_\iepi
\end{aligned}
\end{equation}
\end{lemma}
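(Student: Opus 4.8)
The plan is to derive this from the $\ell_1$ concentration inequality of \cite{weissman2003inequalities} together with a union bound over time. The key preliminary observation is that, although $\tepi_\iepi$ is a random, algorithm-dependent stopping time, the sequence of observed candidates $x_1, x_2, \dots$ is i.i.d.\ from $p$ \emph{and independent of the actions}, since the transition kernel satisfies $P(x'|x,a) = p(x')$. It therefore suffices to prove a bound that holds simultaneously for \emph{every} deterministic sample size $s \geq 1$, applied to the empirical distribution $\hat{p}^{(s)}$ of $x_1, \dots, x_s$; the claim then follows because $\hat{p}_\iepi = \hat{p}^{(\tepi_\iepi - 1)}$ and $\tepi_\iepi - 1 \geq 1$ for every episode beyond the (trivial) first one, whose confidence set can be taken to be the whole simplex.

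First I would invoke Weissman's inequality in the form $\mathbb{P}\big(\|\hat{p}^{(s)} - p\|_1 \geq \epsilon\big) \leq 2^{\ncand}\, e^{-s\epsilon^2/2}$, valid for any $\epsilon > 0$ (using $2^{\ncand}-2\leq 2^{\ncand}$). Then I would set $\epsilon_s = \sqrt{\tfrac{2\ncand \log(6\ncand (s+1)s/\delta)}{s}}$, chosen so that $s\epsilon_s^2/2 = \ncand \log(6\ncand(s+1)s/\delta)$, hence $2^{\ncand} e^{-s\epsilon_s^2/2} = \big(\tfrac{2\delta}{6\ncand(s+1)s}\big)^{\ncand} = \big(\tfrac{\delta}{3\ncand(s+1)s}\big)^{\ncand}$. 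Since $\delta \leq 1$ and $3\ncand(s+1)s \geq 6$ for all $s\geq 1$, the base of this power lies in $(0,1)$, and because $\ncand \geq 1$ the right-hand side is at most $\tfrac{\delta}{3\ncand(s+1)s} \leq \tfrac{\delta}{3s(s+1)}$.

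Finally I would union-bound over all $s \geq 1$: the failure probabilities sum to $\tfrac{\delta}{3}\sum_{s\geq 1}\big(\tfrac1s - \tfrac1{s+1}\big) = \tfrac{\delta}{3}$, which telescopes exactly. Thus with probability at least $1 - \tfrac{\delta}{3}$, for every $s \geq 1$ we have $\|\hat{p}^{(s)} - p\|_1 \leq \epsilon_s$; specializing to $s = \tepi_\iepi - 1$ yields $\|\hat{p}_\iepi - p\|_1 \leq \radepi$ for all episodes $\iepi$ simultaneously, as claimed.

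The argument is essentially routine; the only points that require care are (i) justifying that the adaptive stopping time $\tepi_\iepi$ can be replaced by a uniform-in-$s$ statement, which hinges precisely on the action-independence of the transition kernel, and (ii) calibrating the logarithmic term so that the extra $2^{\ncand}$ factor from Weissman's bound is absorbed and the union-bound series telescopes to exactly $\delta/3$, leaving the remaining $2\delta/3$ failure budget for the other two confidence events used later in the regret analysis.
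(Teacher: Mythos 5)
Your proof is correct and follows exactly the route the paper intends: the paper constructs its confidence sets ``as in UCRL2'' from Weissman's $\ell_1$-deviation inequality, and your explicit calibration of $\epsilon_s$, absorption of the $2^{\ncand}$ factor, and telescoping union bound over sample sizes $s$ (plus the remark handling the degenerate first episode and the action-independence of the observations) is precisely that standard argument spelled out. No gaps.
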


Let $B_l = \{\tilde{p} \in \Delta(\candidateS): \|\hat{p}_\iepi - \tilde{p}\|_{1} \leq  \rad_\iepi\}$ be the confidence set for $p$ at episode $\iepi$. The associated set of compatible CMDPs is then $\{\tilde{M} = (\candidateS, \actS, \varp, r, \xi):  \varp \in B_\iepi\}$. At the beginning of each episode, \optalg finds the optimum of: \begin{align}\label{eq:OptCMDP-1}
    \max_{\pi \in \Pi, \varp \in B_\iepi} \{g^{\varp, \pi} \,| \,\forall i,j,\, {h_j^i}^{\varp, \pi} = 0 \}.
\end{align}


\paragraph{Extended LP} In order to optimize this problem, we re-write \eqref{eq:OptCMDP-1} as an extended LP. Following \cite{rosenberg2019online} and the CMDP literature, we introduce the state-action occupation measure $\mu(x,a) = \pi(x,a) p(x)$ and variables $\beta(x)$ to linearize the $\ell_1$ constraint induced by the confidence set:

\begin{equation}\label{eq:OptCMDP-2}
\begin{aligned}
\max_{\substack{\mu \in \reals^{\candidateS \times \actS } \\ \rad \in \reals^\candidateS}} \quad &\sum_{x, a} \mu(x, a) r(x, a)  \\
\text{u.c.} \quad & \mu \geq 0, \sum_{x,a} \mu(x,a)  = 1 \\
& \forall x, \sum_{a}\mu(x,a) \leq \est_\iepi(x) + \rad(x) \\
& \forall x, \sum_{a}\mu(x,a) \geq \est_\iepi(x) - \rad(x) \\
& \forall x,a, \sum_y \rad(y) \leq \mu(x,a)\rad_l  \\
& \forall i,j, \sum_{x, a} \mu(x, a) \cstr_j^i(x, a) = 0.
\end{aligned}
\end{equation}

The last constraint is the proportional representation constraint. The second to fourth constraints enforce the compatibility of $\mu$ with the $\ell_1$ confidence set. We retrieve the distribution as $\prepi(x) = \sum_{a} \mu(x,a)$, and the policy as:
\begin{align*}
 & \pi_\iepi(x, a) = 
    \begin{cases}
      \frac{\mu(x,a)}{\prepi(x)} & \text{if } \prepi \neq 0\\
      \frac{1}{2} & \text{otherwise\,.}
    \end{cases}  
\end{align*}
Precisely, if some $\tilde{p}_l(x) = 0$, we may set the policy $\piepi(a|x)$ arbitrarily. Since the MDP induced by $\tilde{p}$ is still weakly communicating, and in particular any policy is unichain, the optimal gain in this CMDP is not affected.


We now provide regret and representativeness guarantees.
\begin{theorem}\label{thm:regret}
With probability $\geq 1 - \delta$, the regret of \optalg
satisfies:
\begin{align*}
    \regret(T) = O\big(\sqrt{\ncand T\log(\ncand T/\delta)}\big)\\
    \violate(T) = O\big(\sqrt{\ncand T\log(\ncand T/\delta)}\big).
\end{align*}
Moreover, with probability $1- \delta$, the representation loss of \optalg at horizon $T$ satisfies:
\begin{align*}
    \|\ratio(C_T) - \target\|_{\infty} = O\left(\frac{1}{g^*}\sqrt{\frac{\ncand \log\big(\ncand T / \delta\big)}{T}}\right).
\end{align*}
\end{theorem}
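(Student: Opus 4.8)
The plan is to adapt the optimism-based analyses of UCRL2 and OptCMDP to our average-reward CMDP, exploiting that the i.i.d.\ transition structure makes the gain of \emph{every} policy constant, so that no bias-span or Bellman-equation argument is needed. First I would fix a good event $\mathcal{G}$ of probability at least $1-\delta$ on which: (i) $p \in B_\iepi$ for all episodes $\iepi$ --- this is Lemma~\ref{lem:l1-dev}, whose radius $\radepi$ already absorbs the union bound over possible episode start times via the $\tepi_\iepi(\tepi_\iepi-1)$ factor; (ii) $\big|\sum_{t=1}^T(g^{p,\piepit} - r(x_t,a_t))\big| = O(\sqrt{T\log(1/\delta)})$ by Azuma--Hoeffding, valid because $\piepit$ is $\filtr_{t-1}$-measurable (episode boundaries depend only on past counts) and $\mathbb{E}[r(x_t,a_t)\,|\,\filtr_{t-1}] = \sum_x p(x)\piepit(x,1) = g^{p,\piepit}$; and (iii) the analogous martingale bound $\big|\sum_{t=1}^T(\cstr_j^i(x_t,a_t) - {h_j^i}^{p,\piepit})\big| = O(\sqrt{T\log(\ncand/\delta)})$ holding simultaneously for all $i,j$. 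Each of (i)--(iii) is granted probability $1-\delta/3$.

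On $\mathcal{G}$, optimism holds: since $p\in B_\iepi$, the pair $(\optpi,p)$ is feasible for \eqref{eq:OptCMDP-1}, hence the episode policy returned by the extended LP \eqref{eq:OptCMDP-2} satisfies $g^{\prepi,\piepi}\ge g^*$ while ${h_j^i}^{\prepi,\piepi}=0$. Also $\prepi,p\in B_\iepi$ gives $\|\prepi - p\|_1 \le 2\radepi$, and since $r$ and every $\cstr_j^i$ lie in $[-1,1]$ and depend only on $(x,a)$, for any policy $\pi$ both $|g^{\prepi,\pi} - g^{p,\pi}|$ and $|{h_j^i}^{\prepi,\pi} - {h_j^i}^{p,\pi}|$ are at most $\|\prepi - p\|_1 \le 2\radepi$. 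For $t\in\durepi$ write $g^* - r(x_t,a_t) = (g^* - g^{\prepi,\piepi}) + (g^{\prepi,\piepi} - g^{p,\piepi}) + (g^{p,\piepi} - r(x_t,a_t))$: summing, the first term is $\le 0$ by optimism, the second contributes $\le 2\radepi$ per step, and the third is the martingale of (ii). Hence $\regret(T) \le 2\sum_\iepi |\durepi|\,\radepi + O(\sqrt{T\log(1/\delta)})$. The same decomposition for $\sum_{t=1}^T\cstr_j^i(x_t,a_t)$, now using ${h_j^i}^{\prepi,\piepi}=0$ in place of optimism together with the martingale of (iii), gives $\big|\sum_{t=1}^T\cstr_j^i(x_t,a_t)\big| \le 2\sum_\iepi |\durepi|\,\radepi + O(\sqrt{T\log(\ncand/\delta)})$, so the same bound holds for $\violate(T) = \max_{i,j}\big|\sum_{t=1}^T\cstr_j^i(x_t,a_t)\big|$.

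It remains to bound $\sum_\iepi |\durepi|\,\radepi$. Using $\radepi \le \sqrt{2\ncand\log(6\ncand T^2/\delta)}\,/\sqrt{\tepi_\iepi - 1}$ and a standard UCRL2-style counting argument --- an episode at most doubles the elapsed time, so $t \le 2\tepi_\iepi$ for every $t\in\durepi$, whence $\sum_\iepi |\durepi|/\sqrt{\tepi_\iepi - 1} = O(\sqrt T)$ (and the number of episodes is $O(\ncand\log T)$) --- we obtain $\sum_\iepi |\durepi|\,\radepi = O(\sqrt{\ncand T\log(\ncand T/\delta)})$. Combining with the previous paragraph proves the stated $\regret(T)$ and $\violate(T)$ bounds.

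For the representation loss, observe $\ratio_j^i(C_T) - \target_j^i = \frac{N_j^i(T) - \target_j^i N(T)}{N(T)} = \frac{1}{N(T)}\sum_{t=1}^T\cstr_j^i(x_t,a_t)$, so $\loss{C_T} = \violate(T)/N(T)$. Since $N(T) = \sum_{t=1}^T r(x_t,a_t) = Tg^* - \regret(T)$ and $\regret(T) = o(Tg^*)$ by the bound above, $N(T) \ge \tfrac12 Tg^*$ for $T$ large enough, and therefore $\loss{C_T} \le 2\violate(T)/(Tg^*) = O\!\big(\tfrac{1}{g^*}\sqrt{\ncand\log(\ncand T/\delta)/T}\big)$. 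I expect the main obstacles to be (a) getting optimism right through the extended LP --- that $p\in B_\iepi$ renders the true constrained problem feasible for \eqref{eq:OptCMDP-2}, and that the recovered policy $\piepi$ actually attains value $g^{\prepi,\piepi}$ even on candidates with $\prepi(x)=0$ --- and (b) the doubling/pigeonhole control of $\sum_\iepi|\durepi|/\sqrt{\tepi_\iepi - 1}$ that yields the $\sqrt{\ncand T}$ rate; the remainder is routine Azuma--Hoeffding and $\ell_1$ bookkeeping.
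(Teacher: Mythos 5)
Your proposal is correct and follows essentially the same route as the paper's proof: the same good event (Lemma~\ref{lem:l1-dev} combined with Azuma--Hoeffding martingale bounds for the reward and for every constraint), optimism via feasibility of $(\pi^*,p)$ for \eqref{eq:OptCMDP-1}, the same per-episode decomposition into a confidence-width term controlled by $\|\prepi-p\|_1\le 2\radepi$ via H\"older plus a martingale term (and the analogous argument with ${h_j^i}^{\prepi,\piepi}=0$ for $\violate(T)$), and the same final conversion $\|\ratio(C_T)-\target\|_\infty\le \violate(T)/N(T)$ with $N(T)\ge g^*T-\regret(T)$. The only cosmetic difference is that you bound $\sum_\iepi \card{\durepi}/\sqrt{\tepi_\iepi-1}$ by the UCRL2-style doubling/pigeonhole argument, whereas the paper uses the integral-comparison Lemma~\ref{lem:sum-integrals}; both yield the same $O(\sqrt{T})$ factor.
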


The full proof is in Appendix \ref{sec:proof-regret}. It relies on decomposing regret over episodes, bounding the error on $p$ which decreases over episodes as the confidence sets are refined, and leveraging martingale inequalities on the cumulative rewards. 

Since $\frac{R(T)}{T} = g^* - \frac{N(T)}{T}$, it means that with high probability, the difference between the optimal selection rate and the selection rate of \optalg decreases in $\sqrt{\log(T)/T}$ w.r.t. the horizon $T$. The representation loss decreases at the same speed, meaning that the agent should see enough candidates to accurately estimate $p$, and accept candidates at little cost for representativeness.

Compared to the bound from Proposition \ref{prop:hoeffding-loss}, the cost of not knowing $p$ on representativeness is a $\sqrt{\ncand \log(\ncand)}$ factor. This is due to the estimation of $p$ in the worst case, which is controlled by Lemma \ref{lem:l1-dev}. As we show in our experiments (Sec. \ref{sec:expes}), the impact of $\ncand$ on performance regret (and in turn on sample complexity) is not problematic in our typical citizens' assembly scenario: since there are only a handful of features, our algorithm selects candidates quickly in practice (though representativeness is weakened by not knowing $p$). For specific structures of $p$, we obtain bounds with better scaling in $\ncand$, by controlling each entry of $p$ with Bernstein bounds \cite{maurer2009empirical}, instead 
the $\ell^1$-norm. For completeness, we describe this alternative in Appendix \ref{app:bernstein}.

Interestingly, the representation loss is also inversely proportional to $g^*$, the optimal selection rate in the true CMDP. The reason is that the CMDP constraints do not control the ratios $\ratio_j^i(C_T) = \frac{N_j^i(T)}{N(T)}$, but $N_j^i(T)$ instead (by definition of $R^c(T)$ and $\cstr_j^i$). If $N(T)$ is small, i.e. due to a small selection rate $g$, then $R_j^i(T) = |N_j^i(T) - \target_j^i N(T)|$ is small, but not necessarily $|\frac{N_j^i(T)}{N(T)} - \target_j^i|$: the committee is too small to be representative.





\section{Experiments}\label{sec:expes}

The goal of these experiments is to answer the following: 
\textbf{(Q1)} In practice, for which range of committee sizes do our strategies achieve satisfying sample complexity and representation loss?
\textbf{(Q2)} What is the cost of not knowing the distribution $p$ for the sample complexity and representation loss?

\paragraph{Experimental setting} To answer these questions, we use summary data from the 2017 Citizens’ Assembly on Brexit.
The participants were recruited in an offline manner: volunteers could express interest in a survey, and then $53$ citizens were drawn from the pool of volunteers using stratified sampling, in order to construct an assembly that reflects the diversity of the UK electorate. We use summary statistics published in the report \cite{renwick2017considered} to simulate an online recruitment process.

There are $d=6$ features:
the organisers expressed target quotas for 2 ethnicity groups,
2 social classes, 3 age groups, 8 regions, 
2 gender groups and 2 Brexit vote groups (remain, leave). The report also includes the number of people contacted per feature group (e.g., women, or people who voted to remain) and the volunteering rate for each feature group, which we use as probability of volunteering given a feature group. We use Bayes' rule to compute the probabilities of feature groups among 
volunteers, and use 
them as the marginal distributions $\Pr[x^i = j|\text{volunteers}]$ (since we only consider the population of volunteers).  
Since we only have access to the marginals, we compute the joint distribution as if the features were independent, although our model is agnostic to the dependence structure of the joint distribution. In Appendix \ref{app:add_exps}, we present additional experiments with non-independent features, using a real dataset containing demographic attributes. The results are qualitatively similar.

We study \greedyalg with tolerance $\tol=0.02, 0.05$. We run experiments for $K = 50, 100, 150, 250, 500, 1000$, averaged over $50$ simulations. More details are found in App. \ref{app:exp-detail}.

\paragraph{(A1)} We compare \greedyalg and \statioalg, when the distribution $p$ is \emph{known}. Figure~\ref{fig:compare-K} shows that the greedy strategy with $\tol=0.05$ requires $10$ times more samples than \statioalg, and its representation loss is higher as soon as $K\geq 250.$ \greedyalg with lower tolerance $\tol=0.02$ achieves better representation than \statioalg for smaller committees ($K \leq 100$), but the margin quickly decreases with $K$. However, even for small committees, it requires about $100$ times more samples, which is prohibitively expensive. Figure~\ref{fig:compare-K} shows that for \statioalg, the sample complexity grows linearly in the committee size, with a reasonable slope (we need to find $\tau\approx500$ volunteers for a committee of size $K\approx200$). 

\paragraph{(A2)} To corroborate the previously discussed effect of $\ncand$ when $p$ is \emph{unknown}, we evaluate \optalg on different configurations: (1) using only the features ethnicity, social class, and gender ($d=3, \ncand=8$), (2) using all features except regions ($d=5, \ncand=48$). Fig.~\ref{fig:ucrl-K} shows that unlike \statioalg which has full knowledge of $p$, it is for large committee sizes that \optalg reaches low representation loss (below $0.05$ for $K \geq 1500$ in the configuration(1)). This is because $\optalg$ needs to collect more samples to estimate $p$, as discussed in Th. \ref{thm:regret}. For known $p$, the CMDP approach achieves the same representativeness for middle-sized committees (repr. loss $\leq 0.05$ for $K \approx 250$). Hence, comparing the cases of known (Fig. \ref{fig:compare-K}) and unknown distribution $p$ (Fig. \ref{fig:ucrl-K}), the ignorance of $p$ is not costly for sample complexity, but rather for the representation loss which decreases more slowly. 


Consistently with Th. \ref{thm:regret}, we observe that the representation loss is higher when $\candidateS$ is larger ($d=5$). For small and middle-sized committees, the 
loss of \optalg is much worse than \greedyalg's which also works for unknown $p$. For large committees though, the margin is only $0.05$ when $K\gtrsim2000$ and $\tau \approx 3500$ for $\optalg$ (which is $\times 3$ more sample efficient than \greedyalg). In absolute terms, the theoretical regret bounds have a large constant $\sqrt{\card{\candidateS}}$. This constant is likely unavoidable asymptotically because it comes from Lem. \ref{lem:l1-dev}, but our experiments suggest that in the non-asymptotic regime, \optalg performs better than the bound suggests.




\begin{figure}[t]
    \centering
    \includegraphics[width=\linewidth]{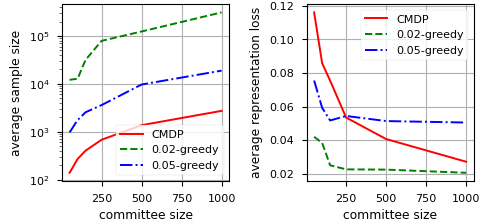}
    \caption{Effect of committee size $K$ on sample complexity and representation loss for different strategies, in the UK Brexit Assembly experiment, using all features. $p$ is \textbf{known}. \label{fig:compare-K}}
\end{figure}

\begin{figure}[t]
    \centering
    \includegraphics[width=\linewidth]{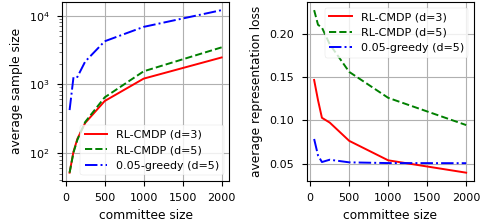}
    \caption{Effect of committee size $K$ on sample complexity and representation loss for \optalg, on data simulated from the UK Brexit Assembly, using $3$ and $5$ features. $p$ is \textbf{unknown}. \label{fig:ucrl-K}}
\end{figure}

\section{Conclusion}\label{sec:conclu}
We formalised the problem of selecting a diverse committee with multi-attribute proportional representation in an online setting. We addressed the case of known candidate distributions with constrained MDPs, and leveraged exploration-exploitation techniques to address unknown distributions. 


\clearpage
\section*{Acknowledgements}

This work was funded in part by the French government under management of Agence Nationale de la Recherche as part of the  ``Investissements  d’avenir'' program  ANR-19-P3IA-0001 (PRAIRIE 3IA Institute). We thank Matteo Pirotta for his helpful suggestions and feedback.

\bibliographystyle{named}
\bibliography{ijcai21}

\clearpage

\appendix

\section{Details of the algorithms}\label{app:greedy}

For precision, we provide the pseudocode of \greedyalg in Alg. \ref{alg:greedyalg}, and the \statioalg-based strategy in Alg. \ref{alg:statio}.

\begin{algorithm}[t]
\caption{\greedyalg algorithm. \label{alg:greedyalg}}
 \SetKwInOut{Input}{input}\SetKwInOut{Output}{output}
 
 \Input{tolerance $\tol$, committee size $K$, targets $\target$}
 \Output{committee $C_{\tau}$}
 $t\gets 0$, $C_0 \gets \emptyset$\;
 \While{$\card{C_t} < K$}{
 $t \gets t +1$\;
 Observe $x_t \sim p$\;
 \If{$\forall i,j, N_j^i(t) + \indic{x^i_t=j} \leq \ceil{\target_j^i K} + \frac{\tol K}{\size_i - 1}$}{
 $C_t \gets C_{t-1} \cup \{x_t\}$ \tcp*{accept $x_t$}
 $\forall i,j, \, N_j^i(t-1) \gets N_j^i(t) + \indic{x^i_t=j}$
 }
} 
\Return{$C_t$}

\end{algorithm}

\begin{algorithm}[t]
\caption{\statioalg-based strategy. \label{alg:statio}}
 \SetKwInOut{Input}{input}\SetKwInOut{Output}{output}
 \Input{optimal policy $\pi^*$ of \eqref{eq:CMDP-opt1}, committee size $K$}
 \Output{committee $C_\tau$}
 $t\gets 0$, $C_0 \gets \emptyset$\;
 \While{$\card{C_t} < K$ }{
    $t\gets t+1$, observe $x_t \sim p$ and play $a_t \sim \optpi(.|x_t)$ \;
    \lIf{$a_t = 1$}{$C_t \gets C_t \cup \{x_t\}$}
 }
\Return{$C_t$}
\end{algorithm}

We also prove the bound on the representation loss of \greedyalg from Proposition \ref{prop:greedy-loss} in Section \ref{sec:greedy}.

\begin{proof}
For all $i,j$, we have by the if-condition and the termination condition:
\begin{align}
    \ratio^i_{j}(C_\tau) = \frac{N^i_j(\tau)}{K} &\leq \frac{\ceil{\target^i_j K}}{K} + \frac{\tol}{\size_i -1}\nonumber\\
    &\leq \target^i_j + \frac{1}{K} + \frac{\tol}{\size_i -1}\label{eq:greedy1}\\
    &\leq \target^i_j + \frac{\size_i - 1}{K} + \tol.\label{eq:greedy2}
\end{align}

For $i \in \intint{d}$, for $j_0 \in \intint{D_i}$, we have:
\begin{align*}
    &\target_{j_0}^i = 1 - \sum_{j \neq j_0} \target_{j}^i\,, 
    &\ratio_{j_0}^i(C_\tau) = 1 - \sum_{j \neq j_0} \ratio_{j}^i(C_\tau). 
\end{align*}

Combining these observations with \eqref{eq:greedy1}:
\begin{align*}
    \ratio_{j_0}^i(C_\tau)& \geq 1 - \sum_{j \neq j_0}\big(\target^i_j + \frac{1}{K} + \frac{\tol}{\size_i -1}\big)  \\
    & = 1 - \sum_{j \neq j_0}\target^i_j - \frac{\size_i - 1}{K} - \tol \\
    &  =\target_{j_0}^i - \frac{\size_i - 1}{K} - \tol.
\end{align*}

Combining this lower bound with the upper bound \eqref{eq:greedy2}, we have for all $i \in \intint{d}, j_0 \in \intint{D_i}$, $\abs{\ratio_{j_0}^i(C_\tau) - \target_{j_0}^i} \leq \frac{\size_i - 1}{K} + \tol,$ which gives the result.

\end{proof}
\section{Proofs}\label{sec:proofs}

\subsection{Proofs of Section \ref{sec:cmdp}}\label{sec:proof-cmdp}

Proof of Lemma \ref{lem:constraint2target}.

\begin{proof}
We have:
\begin{align*}
\sum_{x, a} \pi(x,a)p(x)r_j^i(x,a) &= \expe{\substack{x \sim p \\ a \sim \pi(\cdot|x)}}{r_j^i(x,a)} \\
&= \mathbb{P}^{p, \pi}[a=1, x^i=j],
\end{align*} 
\begin{align*}
    \text{and } \quad g^\pi = \sum_{x,a} \pi(x,a) p(x) r(x,a) = \expe{\substack{x \sim p\\ a \sim \pi(.|x)}}{r(x,a)}\\ = \mathbb{P}^{p,\pi}[a=1].
\end{align*}
The ratio of these two quantities is equal to  $\target_j^i$ by the last constraint of \eqref{eq:CMDP-opt2}. It is also equal to $ \mathbb{P}[x^i=j|a=1],$ which gives the result.

Note that it also holds true for $j = \intint{\size_i}$, since \begin{align*}
&\mathbb{P}[x^i=\size_i|a=1] = 1 - \sum_{j' \in \intint{\size_i - 1}} \mathbb{P}[x^i=j'|a=1] &\text{ and } \\ &\target_{\size_i}^i = 1 - \sum_{j' \in \intint{\size_i - 1}} \target_{j'}^i. 
\end{align*}

\end{proof}

Proof of Proposition \ref{prop:hoeffding-loss}.

\begin{proof} 
For any $t > 0$, we have $$\ratio_j^i(C_t) = \frac{\sum_{s=1}^t \indic{x^i_s=j, a_s=1}}{\sum_{s=1}^t \indic{a_s=1}}.$$

and by Lemma \ref{lem:constraint2target}, we have: $$\expe{}{\indic{x^i=j} \vert a = 1} = \target_j^i.$$






Let $\delta'>0$. Conditionally on any $T \geq K, (a_1, ..., a_T) \in \{0,1\}^T$ s.t. $a_1+ ...+a_T=K$ and $a_T=1$, the draws of $x^i_t|a_t=1$ are independent and thus, by Hoeffding's inequality \cite{hoeffding1994probability}, we have:

\begin{align*}
&\pr{}{|\ratio_j^i(C_T) - \target_j^i| \geq \sqrt{\frac{\log(\frac{2}{\delta'})}{2 N(T)}}\bigg\vert a_1, ..., a_T} \geq 1- \delta'\\
&=\pr{}{|\ratio_j^i(C_T) - \target_j^i| \geq \sqrt{\frac{\log(\frac{2}{\delta'})}{2 K}}\bigg\vert a_1, ..., a_T}.
\end{align*}

Summing up over all such sequences $(a_1,..., a_T)$, we obtain that: 
\begin{align*}
&\pr{}{|\ratio_j^i(C_\tau) - \target_j^i| \geq \sqrt{\frac{\log(\frac{2}{\delta'})}{2 K}}} \geq 1- \delta'.
\end{align*}
The result follows from applying a union bound over all $i \in \intint{d}, j \in \intint{D_i - 1}$ (there are $\tilde{d}$ such $(i,j)$ pairs) and choosing $\delta' = \delta / \tilde{d}$.

\end{proof}

\subsection{Proof of Theorem \ref{thm:regret}}\label{sec:proof-regret}



The following lemma states a standard and useful inequality, which is similar to Lem. 19 in \cite{jaksch2010near}.
\begin{lemma}\label{lem:sum-integrals} Recall that $\nepi$ is the random number of episodes ran by \optalg up until horizon $T$. We have:
\begin{align*}
    \sum_{\iepi=1}^{\nepi} \frac{\card{\durepi}}{\sqrt{\tepi_\iepi - 1}} \leq 2 \sqrt{T}.
\end{align*}
\end{lemma}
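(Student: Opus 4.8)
The plan is to establish the bound $\sum_{\iepi=1}^{\nepi} \frac{\card{\durepi}}{\sqrt{\tepi_\iepi - 1}} \leq 2\sqrt{T}$ by a telescoping/integral-comparison argument, closely following the structure of Lemma 19 in Jaksch et al. First I would observe that, up to boundary conventions, the episode durations partition the time horizon: $\sum_{\iepi=1}^{\nepi} \card{\durepi}$ is essentially $T$ (the last episode may be truncated at $T$, which only helps). The key is that on episode $\iepi$, every timestep $t$ lies in $[\tepi_\iepi, \tepi_{\iepi+1})$, so $\tepi_\iepi - 1 \leq t - 1$ for each such $t$, and hence $\frac{1}{\sqrt{\tepi_\iepi - 1}} \leq \frac{1}{\sqrt{t-1}}$ (with the convention for $t=1$ handled separately, or by using $\tepi_1 = 1$ and noting the first term contributes at most $\card{E_1}/\sqrt{1}$; more cleanly one shifts indices or uses $\max(1, \cdot)$ as in the reference). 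Summing the per-episode term over the timesteps in $\durepi$ gives $\frac{\card{\durepi}}{\sqrt{\tepi_\iepi - 1}} \leq \sum_{t \in \durepi} \frac{1}{\sqrt{\tepi_\iepi - 1}} \leq \sum_{t \in \durepi} \frac{1}{\sqrt{t-1}}$.

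Next I would sum over all episodes $\iepi = 1, \ldots, \nepi$. Since the intervals $\durepi$ tile $\{1, \ldots, T\}$ (disjointly, modulo the shared endpoint convention $\durepi = [\tepi_\iepi, \tepi_{\iepi+1}]$, which I would replace by the half-open version $[\tepi_\iepi, \tepi_{\iepi+1})$ to avoid double-counting), this yields
\begin{align*}
\sum_{\iepi=1}^{\nepi} \frac{\card{\durepi}}{\sqrt{\tepi_\iepi - 1}} \leq \sum_{t=1}^{T} \frac{1}{\sqrt{\max(1, t-1)}}.
\end{align*}
Then I would bound the right-hand side by an integral comparison: $\sum_{t=1}^{T} \frac{1}{\sqrt{\max(1,t-1)}} \leq 1 + \int_{1}^{T} \frac{dx}{\sqrt{x-1}} = 1 + 2\sqrt{T-1} \leq 2\sqrt{T}$ for $T \geq 1$ (one checks the constant works, possibly needing $T$ not too small, or absorbing the additive constant; Jaksch et al.\ get exactly the factor $2$, so the same bookkeeping applies here).

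The main obstacle — really the only delicate point — is the boundary bookkeeping: the paper defines $\durepi = [\tepi_\iepi, \tepi_{\iepi+1}]$ as a closed interval, so consecutive episodes share an endpoint, and $\tepi_1 - 1 = 0$ makes the first denominator vanish. I expect the resolution is exactly as in the cited lemma: interpret $\card{\durepi}$ as $\tepi_{\iepi+1} - \tepi_\iepi$ (the number of steps actually executed under $\pi_\iepi$), use the convention $\frac{1}{\sqrt{0}}$ does not arise because the doubling criterion guarantees $\tepi_1 = 1$ contributes a controlled first term, and replace $t-1$ by $\max(1, t-1)$ throughout. Once the partition and the shift are set up consistently, the inequality is immediate from the integral bound, so I would keep that part brief and merely remark it parallels Lemma 19 of \cite{jaksch2010near}.
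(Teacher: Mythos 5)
There is a real gap, and it sits at the very center of your argument: the comparison inequality is reversed. For $t \in \durepi$ you have $\tepi_\iepi \leq t$, hence $\tepi_\iepi - 1 \leq t-1$, and since $u \mapsto 1/\sqrt{u}$ is \emph{decreasing} this gives $\frac{1}{\sqrt{\tepi_\iepi - 1}} \geq \frac{1}{\sqrt{t-1}}$, not $\leq$. Consequently $\frac{\card{\durepi}}{\sqrt{\tepi_\iepi - 1}} = \sum_{t\in\durepi}\frac{1}{\sqrt{\tepi_\iepi - 1}} \geq \sum_{t\in\durepi}\frac{1}{\sqrt{t-1}}$: your tiling step bounds the quantity from \emph{below}, and the claimed reduction to $\sum_{t=1}^{T} 1/\sqrt{\max(1,t-1)} \leq 2\sqrt{T}$ collapses. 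The overshoot is not a boundary-convention issue that bookkeeping can absorb: under the doubling rule an episode may be as long as the entire preceding history (e.g.\ cumulative counts roughly doubling each episode, $\tepi_\iepi - 1 \approx \card{\durepi} \approx 2^{\iepi-1}$), in which case $\sum_{\iepi}\card{\durepi}/\sqrt{\tepi_\iepi - 1}$ is of order $(\sqrt{2}+1)\sqrt{T}$, strictly larger than $\sum_{t\leq T} 1/\sqrt{t-1} \approx 2\sqrt{T}$. So no termwise comparison of the episode term against the harmonic-type sum over that episode's own (later) time indices can succeed; some structural control of $\card{\durepi}$ relative to $\tepi_\iepi - 1$ must enter the argument. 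That control is precisely the hypothesis $z_k \leq Z_{k-1}$ of Lemma 19 in \cite{jaksch2010near}, which you cite but never actually invoke in the body of the proof (you mention the doubling criterion only to handle the $\tepi_1 = 1$ boundary term), and note that this route yields the constant $\sqrt{2}+1$ rather than $2$.

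The paper's proof goes a different way and sidesteps the per-timestep comparison entirely (it follows Lemma 13 of \cite{zanette2019tighter} rather than Lemma 19 of \cite{jaksch2010near}): it interpolates the cumulative episode lengths by the piecewise-linear function $F$ with $f(x) = F'(x) = \card{E_{\ceil{x}}}$, so that $f(\iepi) = \card{\durepi}$ and $F(\iepi)$ is identified with $\tepi_\iepi$, and then uses monotonicity of $F$ to get $\frac{f(\ceil{x})}{\sqrt{F(\ceil{x})-1}} \leq \frac{f(x)}{\sqrt{F(x)-1}}$, whence the sum is dominated by $\int f(x)/\sqrt{F(x)-1}\,dx = 2\sqrt{F(\nepi)-1} \leq 2\sqrt{T}$. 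The direction works there because each episode's term $\frac{f(\iepi)}{\sqrt{F(\iepi)-1}}$ is matched (after the change of variables $u = F(x)$) against $\int \frac{du}{\sqrt{u-1}}$ over a stretch of $\card{\durepi}$ time units \emph{ending} at the boundary value $F(\iepi)$, i.e.\ over earlier times where $u-1 \leq F(\iepi)-1$ — exactly the opposite anchoring of the denominator from the one you attempted. To repair your write-up, either adopt this interpolation argument, or first prove from the doubling stopping rule that $\card{\durepi} \leq \max(1, \tepi_\iepi - 1)$ and then apply Lemma 19 of \cite{jaksch2010near} verbatim, accepting its constant.
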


\begin{proof}
The proof is similar to that of Lem. 13 in \cite{zanette2019tighter}: we see $\durepi$ as the ``derivative'' of ${\tepi_\iepi}$. Formally, let us define:
\begin{align*}
    &F(x) = \sum_{\iepi=1}^{\floor{x}}\card{\durepi} + \card{E_{\ceil{x}}}(x - \floor{x}) \\
    &f(x) := F'(x) = \card{E_{\ceil{x}}}.
\end{align*}
We first observe that for any integer $\iepi \in \mathbb{N}$, $f(\iepi) = \card{\durepi}$ and $F(\iepi) = \tepi_\iepi.$ Secondly, we have \begin{align*}
     F(x) \leq \sum_{\iepi=1}^{\floor{x}}\card{\durepi} + \card{E_{\ceil{x}}} = \sum_{\iepi=1}^{\ceil{x}}\card{\durepi} = F(\ceil{x}),
\end{align*} and thus:
$$\frac{f(\ceil{x})}{\sqrt{F(\ceil{x})-1}}  \\
    \leq\frac{f(x)}{\sqrt{F(x)-1}}.$$

We derive our bound as follows:
\begin{align*}
    \sum_{\iepi=1}^{\nepi} \frac{\card{\durepi}}{\sqrt{\tepi_\iepi - 1}} &= \sum_{\iepi=1}^{\nepi} \frac{f(\iepi)}{\sqrt{F(\iepi) - 1}} =  \int_{1}^{\nepi}\frac{f(\ceil{x})}{\sqrt{F(\ceil{x})-1}} \,dx \\
    &\leq \int_{1}^{\nepi}\frac{f(x)}{\sqrt{F(x)-1}} \,dx = 2(\sqrt{F(\nepi) - 1}) \\
   & = 2(\sqrt{\tepi_\nepi - 1}) \leq 2 \sqrt{T}.
\end{align*}

\end{proof}

We introduce the following notation: for $f:\candidateS \times \actS \rightarrow \mathbb{R}$, let $f^\pi(x) := \sum_{a} f(x, a) \pi(x,a)$. For all $t > 0$, let $\iepi_t$ denote the episode number at time $t$. The following useful lemma is based on a martingale argument.
\begin{lemma}\label{lem:azuma} Let $f:\candidateS \times \actS \rightarrow \mathbb{R}$. Let $\delta' > 0$. We have:
\begin{align*}
    \pr{}{\sum_{t=1}^T(\langle f^{\piepit}, p\rangle - f(x_t, a_t)) \leq \sqrt{2T \log(1/\delta')}} \geq 1 - \delta'\\
    \pr{}{\sum_{t=1}^T\big\vert\langle f^{\piepit}, p\rangle - f(x_t, a_t)\big\vert \leq \sqrt{2T \log(2/\delta')}} \geq 1 - \delta'.
\end{align*}
\end{lemma}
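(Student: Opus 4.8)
The plan is to prove Lemma~\ref{lem:azuma} as a direct application of the Azuma--Hoeffding inequality to a suitable martingale difference sequence. The key observation is that, conditioned on the filtration $\filtr_{t-1}$ generated by everything up to (but not including) time $t$, the pair $(x_t,a_t)$ is drawn according to $x_t\sim p$ and $a_t\sim\piepit(\cdot|x_t)$, where the policy $\piepit$ is $\filtr_{t-1}$-measurable (since the episode index $\iepi_t$ and the policy $\pi_{\iepi_t}$ are determined by past observations). Hence
\begin{align*}
\expe{}{f(x_t,a_t)\,\big|\,\filtr_{t-1}} = \sum_x p(x)\sum_a \piepit(x,a) f(x,a) = \langle f^{\piepit}, p\rangle,
\end{align*}
so the summands $Z_t := \langle f^{\piepit},p\rangle - f(x_t,a_t)$ form a martingale difference sequence with respect to $(\filtr_t)_t$.

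First I would make precise the filtration $\filtr_t = \sigma(x_1,a_1,\dots,x_t,a_t)$ and verify the measurability claim: the episode boundaries of \optalg are defined by the doubling condition $n_t(x_t) < 2 n_{\tepi_\iepi-1}(x_t)$, which depends only on past counts, and $\pi_\iepi$ is computed from $\est_\iepi$ and $\rad_\iepi$, again functions of the past; therefore $\piepit$ is $\filtr_{t-1}$-measurable and the conditional-expectation computation above is valid. Next I would bound the range of $Z_t$: since $f$ maps into $\mathbb{R}$, this step requires an implicit assumption that $f$ is bounded (in the applications $f$ is $r$ or $\cstr_j^i$, which take values in $[-1,1]$, so $|Z_t|\le 1$); I would state this boundedness hypothesis explicitly or note that $\|f\|_\infty\le 1$ in all uses. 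Then the first inequality follows from the one-sided Azuma--Hoeffding bound applied to $\sum_{t=1}^T Z_t$ with increments bounded by $1$, giving $\pr{}{\sum_{t=1}^T Z_t \geq \sqrt{2T\log(1/\delta')}} \leq \delta'$.

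For the second inequality, I would apply the same one-sided bound to $-Z_t$ as well, obtaining $\pr{}{\sum_{t=1}^T Z_t \leq -\sqrt{2T\log(1/\delta')}} \leq \delta'$, and then take a union bound over the two events to control $\big|\sum_{t=1}^T Z_t\big|$. To convert the bound on the absolute value of the sum into the stated bound on the sum of absolute values $\sum_{t=1}^T |Z_t|$, I would note that the statement as written is actually about $\sum_t \big|\langle f^{\piepit},p\rangle - f(x_t,a_t)\big|$; the cleanest route is to apply the first part not to $f$ but to a sign-adjusted reward $f_t(x,a) := \sigma_t\, f(x,a)$ where $\sigma_t = \operatorname{sign}(\langle f^{\piepit},p\rangle - f(x_t,a_t))$ — but this is not predictable. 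The honest fix, which I expect is what the authors intend, is to read the left-hand side as $\big|\sum_{t}(\langle f^{\piepit},p\rangle - f(x_t,a_t))\big|$ with the absolute value outside, matching exactly the two-sided Azuma bound with a union bound contributing the factor $2$ inside the logarithm; alternatively one decomposes into positive and negative parts and applies Azuma to each.

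The main obstacle is not the probabilistic machinery, which is standard, but the measurability bookkeeping: one must be careful that $\piepit$ genuinely depends only on $\filtr_{t-1}$ and not on $x_t$ itself, and that the random number of episodes $\nepi$ and random stopping behaviour of \optalg do not break the martingale structure (they do not, because we work at a fixed horizon $T$ and the increments $Z_t$ are defined for every $t\le T$ regardless of episode boundaries). A secondary subtlety is making the boundedness of $f$ explicit so that the Azuma increments are controlled by $1$; I would add this as a remark or hypothesis.
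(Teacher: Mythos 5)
Your proposal matches the paper's own proof essentially step for step: the same filtration $\filtr_t=\sigma(x_1,a_1,\dots,x_t,a_t)$, the same observation that $\iepi_t$ and hence $\piepit$ are $\filtr_{t-1}$-measurable so that $M_t=\langle f^{\piepit},p\rangle-f(x_t,a_t)$ is a bounded martingale difference sequence, and the same conclusion via (one- and two-sided) Azuma--Hoeffding. Your two side remarks are also on point and correctly anticipate what the paper leaves implicit: the paper simply asserts $-1\le M_t\le 1$ (valid because $f$ is $r$ or $\cstr_j^i$), and the second inequality is indeed meant, and later used, with the absolute value outside the sum, exactly as in your proposed reading.
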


\begin{proof}
We define the filtration $\filtr_{t} = \sigma(x_1, a_1, ..., x_t, a_t)$ and we first show that the sequence defined by $M_t = \langle f^{\piepit}, p\rangle - f(x_t, a_t)$ is a martingale difference sequence w.r.t. $\filtr_{t}$. $\expe{}{M_t} < \infty$ since the rewards are bounded. Next, the proof that $\expe{}{M_t|\filtr_{t-1}} = 0$ relies on the fact that $\iepi_t$, and in turn the stationary policy $\piepit$, are $\filtr_{t-1}-$measurable.


Therefore, 
\begin{align*}
    \expe{}{\langle f^{\piepit}, p\rangle \big| \filtr_{t-1}} = \langle f^{\piepit}, p\rangle. 
\end{align*}

We also have: 

\begin{align*}
    \expe{}{f(x_t, a_t) \big| \filtr_{t-1}} &= \expe{}{\sum_{x,a} f(x,a) \indic{(x_t, a_t) = (x,a)} \bigg\vert \filtr_{t-1}}\\
    &= \sum_{x,a} f(x,a) \piepit(x,a) = \langle f^{\piepit}, p\rangle.
\end{align*}

Subtracting the two expressions above, we get $\expe{}{M_t|\filtr_{t-1}} = 0$. $(M_t)_{t}$ is thus a Martingale difference sequence, such that $-1\leq M_t \leq 1$. The result follows from Azuma-Hoeffding's inequality.

\end{proof}

We now prove Theorem~\ref{thm:regret}.

\begin{proof} 
We define $\mathcal{E} = \mathcal{E}_1 \cap\mathcal{E}_2  \cap\mathcal{E}_3 $ to be the ``good event'', with:
\begin{align*}
    &\mathcal{E}_1 = \{\forall \iepi \geq 1, \prepi \in B_\iepi\},\\
    &\mathcal{E}_2 = \{\sum_{t=1}^T(\langle r^{\piepit}, p\rangle - r(x_t, a_t)) \leq \sqrt{2T \log(3/\delta)}\}, \\
    &\mathcal{E}_3 = \left\{\forall i,j, \quad \sum_{t=1}^T\big\vert\langle {\cstr_j^i}^{\piepit}, p\rangle - {\cstr_j^i}(x_t, a_t)\big\vert \leq \sqrt{2T \log\big(\frac{6\tilde{d}}{\delta}\big)}\right\}.
\end{align*}

By Lemma \ref{lem:l1-dev}, we have
\begin{align}\label{eq:probinterval}
    \pr{}{\exists \iepi \geq 1, \prepi \in B_\iepi} \geq 1- \frac{\delta}{3}. 
\end{align}
Combining \eqref{eq:probinterval} with Lemma \ref{lem:azuma} and using union bounds, $\pr{}{\mathcal{E}} \geq 1 - \delta$. From now on, we assume that the good event $\mathcal{E}$ holds true.

\paragraph{Performance regret} We start by upper bounding the performance regret $\regret(T)$. Let $\Delta_\iepi = \sumepi (g^* - r(x_t, a_t))$ be the regret of episode $\iepi$. Let $(\pi_\iepi, \tilde{p}_\iepi)$ be the solution of the optimistic CMDP \eqref{eq:OptCMDP-1} at episode $\iepi$. Since $(\pi^*, p)$ is feasible for \eqref{eq:OptCMDP-1}, then $g^* \leq g^{\prepi,\piepi}$. We also note that $$g^{\prepi,\piepi} = \sum_{x,a} r(x,a)\prepi(x)\piepi(x,a) = \sum_{x} r^{\piepi}(x)\prepi(x).$$

Therefore, we have:
\begin{equation}\label{eq:bound-epi-reg}
\begin{aligned}
    \Delta_\iepi &\leq \sumepi (g^{\prepi,\piepi} - r(x_t, a_t))\\
    &= \sumepi (\sum_{x} r^{\piepi}(x)\prepi(x) - r(x_t, a_t))\\
    &= \sumepi \sum_{x} r^{\piepi}(x)(\prepi(x) - p(x))\\
    & \quad + \sumepi(\sum_{x} r^{\piepi}(x)p(x) - r(x_t, a_t))\,
\end{aligned}
\end{equation}

Using Hölder's inequality and the fact that $\|r\|_{\infty} = 1$, the first term can be bounded by $\card{\durepi} \|\prepi - p\|_{1}$. By validity of the confidence intervals under event $\mathcal{E}$:
\begin{equation}
\begin{aligned}
    \|\prepi - p\|_{1} \leq 2 \radepi \leq \frac{2 \sqrt{2\ncand\log\big(6\ncand T (T-1) / \delta\big)}}{\sqrt{\tepi_\iepi -1}}
\end{aligned}
\end{equation}

Summing up over episodes $\iepi=1,..., \nepi$:
\begin{equation}
\begin{aligned}
    \regret(T) &\leq 2 \sqrt{2\ncand\log\big(\frac{6\ncand T (T-1)}{ \delta}\big)}\sum_{\iepi=1}^{\nepi}\frac{\card{\durepi} }{\sqrt{\tepi_\iepi -1}} \\
    &+ \sum_{t=1}^T(\sum_{x} r^{\piepit}(x)p(x) - r(x_t, a_t)).
\end{aligned} 
\end{equation}

We bound the first sum using Lemma \ref{lem:sum-integrals}. The second term can be bounded as in Lemma \ref{lem:azuma} because $\mathcal{E}_2$ holds true. This gives us the resulting bound which holds under $\mathcal{E}$:
\begin{equation*}
\begin{aligned}
    \regret(T) &\leq 4 \sqrt{\ncand\log\big(\frac{6\ncand T (T-1)}{ \delta}\big)T} + \sqrt{2T \log\big(\frac{3}{\delta}\big)}.
\end{aligned} 
\end{equation*}

\paragraph{Cost of constraint violations} The proof for the cost of constraint violations is very similar. Let us bound $\regret_j^i(T) := \sum_{t=1}^T |\cstr_j^i(x_t, a_t)|$ for all $i,j$. We briefly drop the sub/superscripts $i,j$.

At each episode $\iepi$, since $(\piepi, \prepi)$ is a solution of \eqref{eq:OptCMDP-1}, we have $h^{\prepi, \piepi}=0$, and thus $\sum_{x,a}\cstr(x,a)\piepi(x,a)\prepi(x) = \sum_{x}\cstr^{\piepi}(x)\prepi(x) = 0$. Therefore, we have:

\begin{align*}
    \abs{\sum_{t=1}^T \cstr(x_t, a_t)} &= \abs{\sum_{\iepi=1}^{\nepi}\big(\sumepi \cstr(x_t, a_t) - \sum_{x} \cstr^{\piepi} (x)\prepi(x)\big)} \\ &\leq\bigg\vert \sum_{\iepi=1}^{\nepi}\sumepi\sum_{x} \cstr^{\piepi} (x)(p(x) - \prepi(x))\\
    & \quad \quad + \sum_{\iepi=1}^{\nepi}\big(\sumepi\cstr(x_t, a_t) - \sum_{x} \cstr^{\piepi}(x)p(x) \big)\bigg\vert\\
    &\leq \sum_{\iepi=1}^{\nepi}\sumepi\abs{\sum_{x} \cstr^{\piepi} (x)(p(x) - \prepi(x))}\\
    & \quad \quad + \abs{\sum_{\iepi=1}^{\nepi}\big(\sumepi\cstr(x_t, a_t) - \sum_{x} \cstr^{\piepi}(x)p(x) \big)}\\
    &\leq \sum_{\iepi=1}^{\nepi}\card{\durepi}\|\cstr^{\piepi}\|_{\infty}\|p - \prepi\|_1\\
    & \quad \quad + \abs{\sum_{t=1}^T \bigg(\cstr(x_t, a_t) - \sum_{x} \cstr^{\piepit}(x)p(x)\bigg)},\\
\end{align*}
where the first part of the last inequality is again by Hölder's inequality. Similarly to the performance regret,  the first term is bounded using the validity of confidence intervals under the good event $\mathcal{E}$ and Lemma \ref{lem:sum-integrals}, and the second term is bounded by the martingale argument using Lemma \ref{lem:azuma}. Hence, under $\mathcal{E}$ we have for any $i,j$:
\begin{equation*}
\begin{aligned}
    R_j^i(T) &\leq 4 \sqrt{\ncand\log\big(\frac{6\ncand T (T-1)}{ \delta}\big)T} + \sqrt{2T \log\big(\frac{6\tilde{d}}{\delta}\big)}.
\end{aligned} 
\end{equation*}

And thus the same bounds holds for $\violate(T) = \max_{i,j} R_j^i(T)$.

\paragraph{Representation loss}
We may now derive the bound on representation loss.

Let $f(T) = O\big(\sqrt{\ncand \log(\ncand T/\delta)}\big)$. The regret bounds imply that with $1- \delta$:
\begin{align*}
    R(T) =& g^*T - N(T) \leq f(T) \Rightarrow N(T) \geq g^*T - f(T)\\
    \frac{R^c(T)}{N(T)} =& \max_{i,j} \abs{\frac{N_j^i(T)}{N(T)} - \target_j^i \frac{N(T)}{N(T)}}\leq \frac{f(T)}{N(T)}\\
    \text{i.e., } & \|\ratio(C_T) - \target\|_{\infty} \leq \frac{f(T)}{N(T)}.
\end{align*}
Therefore, using $N(T) \geq 1$, we have: 
\begin{align*}
    \|\ratio(C_T) - \target\|_{\infty}&\leq \frac{f(T)}{\max(1, g^*T - f(T))} \\
    &= O\bigg(\sqrt{\frac{\ncand \log(\ncand T/\delta)}{{g^*}^2T}}\bigg).
\end{align*}
\end{proof}

\section{Alternative to \optalg with Bernstein bounds}\label{app:bernstein}

We present \optalgb, an alternative to \optalg which uses Bernstein empirical bounds \cite{maurer2009empirical}.

At each episode $l$, the algorithm estimates the distributions by $\est_\iepi(x) = \frac{n_{\tepi_\iepi-1}(x)}{\tepi_\iepi-1}$ and maintains confidence intervals $[\lcb_\iepi(x), \ucb_\iepi(x)]$. These are built using Bernstein's empirical inequality \cite{maurer2009empirical}, which implies that there exists constants $B_1, B_2$ such that with probability $\geq 1 - \frac{\delta}{3}$, for each $l\geq 1$ and $x \in \candidateS$, 
\begin{align}\label{eq:bernstein}
    |p(x) - \est_\iepi(x)| \leq B_1 \sqrt{\frac{\hat{\sigma}_\iepi^2(x) \log(\frac{6\ncand\tepi_\iepi}{\delta} )}{1 \land(\tepi_\iepi-1)}} + B_2 \frac{\log(\frac{6\ncand\tepi_\iepi}{\delta})}{1 \land (\tepi_\iepi-1)},
\end{align} 

where $\hat{\sigma}_\iepi(x) = \sqrt{\est_\iepi(x) (1- \est_\iepi(x))}$.

Following e.g. \cite{efroni2020exploration}, we re-write \eqref{eq:OptCMDP-1} as an extended LP by introducing the state-action occupation measure $  \mu(x,a) = \pi(x,a) p(x)$.

\begin{equation}\label{eq:OptCMDP-2-alternative}
\begin{aligned}
\max_{\mu \in \reals^{\candidateS \times \actS }} \quad &\sum_{x, a} \mu(x, a) r(x, a)  \\
\text{u.c.} \quad & \mu \geq 0, \sum_{x,a} \mu(x,a)  = 1 \\
& \forall x, \sum_{a}\mu(x,a) \leq \ucb_\iepi(x) \\
& \forall x, \sum_{a}\mu(x,a) \geq \lcb_\iepi(x) \\
& \forall i,j, \sum_{x, a} \mu(x, a) \cstr_j^i(x, a) = 0.
\end{aligned}
\end{equation}

The second to fourth constraints enforce the compatibility of $\mu$ with the confidence intervals. Controlling each entry of $p$ with Bernstein bounds instead of the $\ell^1$-norm allows for a simpler optimization problem than the extended LP \eqref{eq:OptCMDP-2}. We get the following regret bound:

\begin{theorem}[Regret guarantees]\label{thm:regret-bernstein}
With probability $\geq 1 - \delta$, the regret of \optalgb
satisfies:
\begin{align*}
    \regret(T) = O\big(\sqrt{\ncand T\log(\ncand T/\delta)} + \ncand \log(\ncand T/\delta)^2\big)\\
    \violate(T) = O\big(\sqrt{\ncand T\log(\ncand T/\delta)} + \ncand \log(\ncand T/\delta)^2\big).
\end{align*}

With probability $\geq 1 - \delta$, the representation loss satisfies:
\begin{align*}
    &\|\ratio(C_T) - \target\|_{\infty} \\
    &= O\left(\frac{1}{g^*}\sqrt{\frac{\ncand \log\big(\ncand T / \delta\big)}{T}} + \frac{\ncand \log(\ncand T/\delta)^2}{g^*T} \right).
\end{align*}

\end{theorem}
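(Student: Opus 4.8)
The plan is to mirror the proof of Theorem~\ref{thm:regret} almost verbatim, with the only difference being the use of the Bernstein-type confidence intervals \eqref{eq:bernstein} in place of the $\ell^1$-deviation bound of Lemma~\ref{lem:l1-dev}. First I would redefine the good event as $\mathcal{E} = \mathcal{E}_1 \cap \mathcal{E}_2 \cap \mathcal{E}_3$, where now $\mathcal{E}_1 = \{\forall \iepi \geq 1, \forall x, \lcb_\iepi(x) \leq p(x) \leq \ucb_\iepi(x)\}$, which holds with probability $\geq 1 - \delta/3$ by \eqref{eq:bernstein}, while $\mathcal{E}_2, \mathcal{E}_3$ are the same martingale events from Lemma~\ref{lem:azuma}. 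A union bound gives $\pr{}{\mathcal{E}} \geq 1 - \delta$, and I condition on $\mathcal{E}$ throughout.

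For the performance regret, the episode decomposition \eqref{eq:bound-epi-reg} carries over unchanged: since $(\pi^*, p)$ is feasible for the optimistic program \eqref{eq:OptCMDP-2-alternative} (because $p$ lies in the confidence box under $\mathcal{E}_1$), optimism gives $g^* \leq g^{\prepi, \piepi}$, and the regret of episode $\iepi$ splits into a model-error term $\card{\durepi}\|\prepi - p\|_1$ and a martingale term. The key difference is bounding $\|\prepi - p\|_1$: instead of the single $\ell^1$ radius, I would write $\|\prepi - p\|_1 \leq \sum_x (\ucb_\iepi(x) - \lcb_\iepi(x))$ and bound each width by the right-hand side of \eqref{eq:bernstein}, namely $2 B_1 \sqrt{\hat{\sigma}_\iepi^2(x)\log(6\ncand\tepi_\iepi/\delta)/(1 \land (\tepi_\iepi - 1))} + 2 B_2 \log(6\ncand\tepi_\iepi/\delta)/(1 \land (\tepi_\iepi - 1))$. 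Using $\sum_x \hat{\sigma}_\iepi(x) \leq \sqrt{\ncand \sum_x \hat{\sigma}_\iepi^2(x)} \leq \sqrt{\ncand}$ (Cauchy--Schwarz together with $\sum_x \est_\iepi(x)(1-\est_\iepi(x)) \leq \sum_x \est_\iepi(x) = 1$), the variance term over a single episode is $O(\card{\durepi}\sqrt{\ncand \log(\ncand\tepi_\iepi/\delta)/(\tepi_\iepi - 1)})$, and summing over episodes via Lemma~\ref{lem:sum-integrals} yields the $O(\sqrt{\ncand T \log(\ncand T/\delta)})$ term. The lower-order term $\sum_\iepi \card{\durepi}\, \ncand \log(\ncand T/\delta)/(\tepi_\iepi - 1)$ needs a separate harmonic-sum argument: $\sum_\iepi \card{\durepi}/(\tepi_\iepi - 1) = O(\log T)$ by an integral comparison similar to Lemma~\ref{lem:sum-integrals}, which together with the $O(\log^2)$ bound on the number of episodes (episodes end when some $n_t(x)$ doubles, so $\nepi = O(\ncand \log T)$) gives the $O(\ncand \log(\ncand T/\delta)^2)$ additive term. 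Adding the martingale term $\sqrt{2T\log(3/\delta)}$ from $\mathcal{E}_2$ completes the regret bound; the constraint-violation bound follows by the identical computation applied to $\cstr_j^i$ (using $\|\cstr_j^i{}^{\piepi}\|_\infty \leq 1$ and the event $\mathcal{E}_3$), with a union bound over the $\tilde{d}$ constraints absorbed into the logarithm.

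For the representation loss, I would reuse the final block of the proof of Theorem~\ref{thm:regret} verbatim: from $\regret(T) = g^* T - N(T) \leq f(T)$ with $f(T) = O(\sqrt{\ncand T \log(\ncand T/\delta)} + \ncand \log(\ncand T/\delta)^2)$ we get $N(T) \geq g^* T - f(T)$, and from $\violate(T) \leq f(T)$ we get $\|\ratio(C_T) - \target\|_\infty = R^c(T)/N(T) \leq f(T)/\max(1, g^* T - f(T))$, which simplifies to the stated $O\big(\frac{1}{g^*}\sqrt{\ncand \log(\ncand T/\delta)/T} + \ncand \log(\ncand T/\delta)^2/(g^* T)\big)$.

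The main obstacle is the careful handling of the additive lower-order terms coming from the Bernstein bound: unlike the clean $\sqrt{1/(\tepi_\iepi-1)}$ scaling in \optalg, here one must separately sum a $\sqrt{\cdot}$ term (handled by Lemma~\ref{lem:sum-integrals}) and a $1/(\tepi_\iepi - 1)$ term (handled by a harmonic-type bound plus the episode-count bound $\nepi = O(\ncand \log T)$), and one must verify that $\sum_x \hat\sigma_\iepi(x)$ is controlled uniformly — this is where the $\sqrt{\ncand}$ rather than $\ncand$ scaling in the leading term comes from, and it is the one spot where the improvement over \optalg's worst-case $\ell^1$ radius is genuinely used.
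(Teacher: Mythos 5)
Your proposal is correct and follows essentially the same route as the paper's proof: the same Bernstein-based good event, the same optimism and episode decomposition, a Cauchy--Schwarz step yielding the $\sqrt{\ncand b_{\delta,T}/(\tepi_\iepi-1)}$ variance term summed via Lemma~\ref{lem:sum-integrals}, the harmonic bound $\sum_\iepi \card{\durepi}/(\tepi_\iepi-1) \leq \log T$ (the paper's Lemma~\ref{lem:sum-integrals-2}) for the lower-order term, and the identical final step for the representation loss. The only nitpick is that your appeal to the episode-count bound $\nepi = O(\ncand\log T)$ is unnecessary: the harmonic sum alone already gives the $O(\ncand\log(\ncand T/\delta)^2)$ additive term.
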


When using Bernstein bounds, the representation loss carries $O(\ncand \log(\ncand T/\delta)^2)$. This factor but has a bigger scaling with $\ncand$, but decreases rapidly in $\frac{\log(T)^2}{T}$.

The Bernstein version of $\optalg$ may be advantageous for some candidate distributions $p$. For example, if the support $\supp$ of $p$ is very small compared to $\candidateS$, the first term in the Bernstein empirical inequality \eqref{eq:bernstein} is equal to zero for all $x$ outside the support. Therefore, the representation loss scales as:
\begin{align*}
    &\|\ratio(C_T) - \target\|_{\infty} \\
    &= O\left(\frac{1}{g^*}\sqrt{\frac{\card{\supp} \log\big(\card{\supp} T / \delta\big)}{T}} + \frac{\ncand \log(\ncand T/\delta)^2}{g^*T} \right),
\end{align*}
where $\card{\supp} \ll \ncand$. Thus, the second term with fast decrease in $\frac{\log(T)^2}{T}$ controls the bound on representation loss.

\subsection{Proofs}

The following lemma states a useful inequality akin to Lemma \ref{lem:sum-integrals}.
\begin{lemma}\label{lem:sum-integrals-2} We have:
\begin{align*}
    \sum_{\iepi=1}^{\nepi} \frac{\card{\durepi}}{\tepi_\iepi -1} \leq \log(T) 
\end{align*}
\end{lemma}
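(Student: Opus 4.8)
The plan is to mimic the proof of Lemma \ref{lem:sum-integrals}, replacing the $\sqrt{\cdot}$ integrand by $1/(\cdot)$ and using that the primitive of $1/x$ is $\log x$. Concretely, reuse the same piecewise-linear interpolation: define $F(x) = \sum_{\iepi=1}^{\floor{x}}\card{\durepi} + \card{E_{\ceil{x}}}(x-\floor{x})$ and $f(x) = F'(x) = \card{E_{\ceil{x}}}$, so that $F(\iepi) = \tepi_\iepi$ and $f(\iepi) = \card{\durepi}$ at integers. As in the earlier lemma, $F$ is nondecreasing and $F(x)\le F(\ceil{x})$, hence $\frac{f(\ceil{x})}{F(\ceil{x})-1} \le \frac{f(x)}{F(x)-1}$ pointwise (here I would want $F(x)-1\ge$ something positive, which holds since $\tepi_1 = 1$ forces $F(x) \ge 1$ on the relevant range; I should be slightly careful and note $\tepi_\iepi - 1 \ge 1$ for $\iepi \ge 2$, handling the first episode's term of $\card{E_1}/(\tepi_1-1)$ separately if needed, though in fact $\tepi_1 = 1$ and the convention should be that the first episode contributes at time $\tepi_1 - 1$... actually one typically starts episodes so that $\tepi_\iepi - 1 \ge 1$; I will follow the same convention implicit in Lemma \ref{lem:sum-integrals}).

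Then the core computation is
\begin{align*}
    \sum_{\iepi=1}^{\nepi}\frac{\card{\durepi}}{\tepi_\iepi - 1} = \sum_{\iepi=1}^{\nepi}\frac{f(\iepi)}{F(\iepi)-1} = \int_1^{\nepi}\frac{f(\ceil{x})}{F(\ceil{x})-1}\,dx \le \int_1^{\nepi}\frac{f(x)}{F(x)-1}\,dx = \log\big(F(\nepi)-1\big) = \log(\tepi_\nepi - 1) \le \log T,
\end{align*}
where the middle equality uses that $f(\ceil{x})/(F(\ceil{x})-1)$ is constant on each interval $(\iepi-1,\iepi]$, the inequality is the pointwise bound above, the next equality is $\frac{d}{dx}\log(F(x)-1) = F'(x)/(F(x)-1) = f(x)/(F(x)-1)$ together with $\log(F(1)-1)=\log 0$... that last point is the subtlety: $F(1) - 1 = \tepi_1 - 1 = 0$ makes $\log(F(1)-1) = -\infty$, so the naive antiderivative evaluation fails at the lower limit.

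So the main obstacle is exactly this boundary issue at the start, which did not arise for the $\sqrt{\cdot}$ version because there $2\sqrt{F(1)-1} = 0$ is finite. The fix is to split off the first episode: bound $\frac{\card{E_1}}{\tepi_1 - 1}$ (or $\frac{\card{E_1}}{1\wedge(\tepi_1-1)}$) separately — in the UCRL2-style episode schedule the first episode has length $1$ since the very first observation already doubles the count $n_0(x_1) = 0 \to 1$, giving a contribution of exactly $1$ — and then run the integral argument from episode $2$ onward where $\tepi_\iepi - 1 \ge 1$, obtaining $\sum_{\iepi=2}^{\nepi}\frac{\card{\durepi}}{\tepi_\iepi-1} \le \log(\tepi_\nepi - 1) - \log(\tepi_2 - 1) \le \log T$. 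Combining, $\sum_{\iepi=1}^{\nepi}\frac{\card{\durepi}}{\tepi_\iepi-1} \le 1 + \log T$; if one instead interprets the $1\wedge(\cdot)$ convention consistently with how the quantity is actually used downstream (as in \eqref{eq:bernstein}), the clean statement $\le \log T$ follows, possibly after a harmless constant adjustment or by noting $\log(\tepi_2 - 1)$ compensates. I would state the lemma with whatever constant the downstream Bernstein regret computation actually needs and make the episode-indexing convention explicit so the boundary term is transparent.
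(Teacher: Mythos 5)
Your proposal follows essentially the same route as the paper's proof: the same piecewise-linear interpolation $F$, $f$ from Lemma \ref{lem:sum-integrals}, the same pointwise comparison $\frac{f(\ceil{x})}{F(\ceil{x})-1}\le\frac{f(x)}{F(x)-1}$, and the antiderivative $\log(F(x)-1)$ in place of $2\sqrt{F(x)-1}$, ending with $\log(\tepi_\nepi-1)\le\log T$. The lower-limit subtlety you flag (the $\iepi=1$ term, where $\tepi_1-1=0$) is genuine but is silently glossed over in the paper's own proof as well, and your fix of splitting off the first episode (at the cost of an additive constant absorbed in the downstream $O(\cdot)$ bound) is a sound, slightly more careful rendering of the same argument.
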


\begin{proof} The proof is similar to Lem. 13 in \cite{zanette2019tighter}. Using the same notation as in the proof of Lemma \ref{lem:sum-integrals},
\begin{align*}
    \sum_{\iepi=1}^{\nepi} \frac{\card{\durepi}}{\tepi_\iepi - 1} &= \sum_{\iepi=1}^{\nepi} \frac{f(\iepi)}{F(\iepi) - 1} =  \int_{1}^{\nepi}\frac{f(\ceil{x})}{F(\ceil{x})-1} \,dx \\
    &\leq \int_{1}^{\nepi}\frac{f(x)}{F(x)-1} \,dx = \log(F(\nepi) - 1) \\
   & = \log(\tepi_\nepi - 1) \leq \log{T}.
\end{align*}
\end{proof}
We now prove Theorem \ref{thm:regret-bernstein}.

\begin{proof}
We re-use the same steps and notation as for the proof of Theorem \ref{thm:regret}. 

Here instead, $\mathcal{E}_1$ is the event such that the confidence intervals are valid \eqref{eq:bernstein}. Under the high-probability good event $\mathcal{E} = \mathcal{E}_1 \cap \mathcal{E}_2 \cap \mathcal{E}_3$, we thus have:

$|\piepi(x) - p(x)| \lesssim \sqrt{\frac{\hat{p}_l(x)(1-\hat{p}_l(x))b_{\delta, T}}{\tau_l-1}} + \frac{b_{\delta, T}}{\tau_l-1}$

where $b_{\delta, T} = \log(\frac{6\ncand T}{\delta})$.

In the following, the first inequality is by validity of the Bernstein confidence intervals under $\mathcal{E}$, and the second inequality is by Cauchy-Schwarz's inequality:
\begin{equation}\label{eq:bern-proof-1}
\begin{aligned}
&\sumepi \sum_{x} r^{\piepi}(x)(\prepi(x) - p(x)) \\
    &\leq \sumepi \sum_x r^{\piepi}(x) \sqrt{\frac{\hat{p}_l(x)(1-\hat{p}_l(x))b_{\delta, T}}{\tau_l-1}} \\
    & \quad + \sumepi \frac{b_{\delta, T}}{\tau_l-1}  \underbrace{\sum_x r^{\piepi}(x)}_{\leq \ncand}\\
    &\leq \sumepi \sqrt{ \underbrace{(\sum_x 1 - \hat{p}_l(x))}_{\leq \ncand} \underbrace{(\sum_x \hat{p}_l(x) r^{\piepi}(x) b_{\delta, T})}_{\leq b_{\delta, T}}} \sqrt{\frac{1}{\tau_l - 1}}\\
     &\quad + \sumepi \frac{\ncand b_{\delta, T}}{\tau_l - 1}
\end{aligned}
\end{equation}

By Lemmas \ref{lem:sum-integrals} and \ref{lem:sum-integrals-2}, we have:
\begin{align*}
    &\sqrt{\ncand b_{\delta, T}} \sum_{\iepi=1}^{\nepi} \frac{|E_l|}{\sqrt{\tau_l - 1}} \leq 2\sqrt{\ncand b_{\delta, T} T}\\
    &\ncand b_{\delta, T} \sum_{\iepi=1}^{\nepi} \frac{|E_l|}{\tau_l - 1} \leq \ncand b_{\delta, T}\log(T).
\end{align*}

Summing up over episodes in inequality \eqref{eq:bern-proof-1} and plugging in the above inequalities gives the desired bound by following the steps of the proof of Theorem \ref{thm:regret}.
\end{proof}




\section{Experiments}
\subsection{Details on the Brexit experiments} \label{app:exp-detail}

We provide in Table \ref{tab:brexit-stats} the target vectors $(\target_j^i)_{i,j}$ and marginal distributions $(\mathbb{P}^p[x^i=j])_{i,j}$ extracted from the Citizens' Assembly on Brexit report \cite{renwick2017considered}.\footnote{ \url{https://citizensassembly.co.uk/wp-content/uploads/2017/12/Citizens-Assembly-on-Brexit-Report.pdf}, pages 28-32.} The report includes the volunteering rates for each feature group, i.e. $\Pr[\text{volunteer}| x^i=j]$. To compute the marginal distributions $(\Pr[x^i=j | \text{volunteer}])_{i,j}$, we thus use Bayes' rule to compute the probability of each feature group among the volunteer population\footnotemark, that is: \begin{align*}
  \mathbb{P}^p[x^i=j] &= \Pr[x^i=j | \text{volunteer}]\\ &=  \frac{\Pr[\text{volunteer}| x^i=j]\Pr[x^i=j]}{\Pr[\text{volunteer}]}.   
\end{align*}
\footnotetext{In doing so, we notice that the probability of finding non-voter volunteers is almost zero, hence we only consider ``remain'' and ``leave'' for the feature Brexit vote. Indeed, the report states ``The only target that proved impossible to meet was that for non-voters in the 2016 referendum.'' p.28.}

We often have $\target_j^i \neq \mathbb{P}^p[x^i=j]$. For example, compared to the age target, we are less likely to find younger people ($\leq34$ years old) among volunteers. For gender, while the target was gender parity, we are much less likely to find women than men in the volunteer population.

\begin{table}[]
\centering
\begin{tabular}{|l|l|l|}
\hline
 & Targets & Marginals \\
 \hline
Ethnicity & 0.860 / 0.140 & 0.863 / 0.136 \\
\hline
Social class & 0.550 / 0.450 & 0.556 / 0.444 \\
\hline
Age & 0.288 / 0.344 / 0.367 & 0.154 / 0.432 / 0.414 \\
\hline
Region & \begin{tabular}[c]{@{}l@{}}0.233 / 0.160 / 0.093\\  / 0.134 / 0.222 / 0.047\\  / 0.082 / 0.028\end{tabular} & \begin{tabular}[c]{@{}l@{}}0.179 / 0.155 / 0.090 \\ / 0.117 / 0.211 / 0.073 \\ / 0.154 / 0.021\end{tabular} \\
\hline
Gender & 0.507 / 0.493 & 0.384 / 0.616 \\
\hline
Brexit vote & 0.481 / 0.519 & 0.565 / 0.434\\
\hline
\end{tabular}
\caption{Target quotas (from the report) and marginal distribution (computed using Bayes' rule) for the Brexit experiment.}
\label{tab:brexit-stats}
\end{table}

For our experiments presented in Section \ref{sec:expes}, we used Python and the CPLEX LP solver, and a machine with Intel Xeon Gold 6230 CPUs, 2.10 GHz, 1.3 MiB of cache. 

\subsection{Experiments with dependent features}\label{app:add_exps}

The goal of these experiments is to answer the following: what is the impact of the dependence structure of the joint feature distribution $p$ on the sample complexity and representation loss of our algorithms? Since we may only retrieve marginal distributions from the Citizen's Assembly on Brexit report, we keep the target quotas on each feature but simulate joint feature distributions from another dataset with demographic attributes, the standard Adult Census Income dataset \cite{Dua:2019}. 

The Adult dataset consists of approximately $49.000$ entries of subjects in the US, each with 14 demographic features and a binary label indicating whether a subject’s income is above or below $50$K USD. We only keep features that can be mapped to our Brexit Citizen's Assembly example: gender, age, ethnicity and income, which we use in lieu of social class. We do not consider proportional representation for region and Brexit vote since there are no such features in the Adult dataset. In our preprocessing of the Adult dataset, we create the same three age categories ($<$35, 35-54, $>$54), the same two ethnicity groups (white / non-white) and we use the binary income variable as a proxy for social class, by assigning $>50$K to upper class and $\leq50$K to lower class. This leaves us with $4$ features with $2, 2, 2, 3$ possible values.

To create dependencies between features, we consider two graphical structures shown in Figure \ref{fig:BN}, and for each we fit a Bayesian network to the dataset to generate a model of the joint distribution $p(x)$. We consider one structure with little dependence, and one structure with strong dependence between features.

\begin{figure}
\centering
\begin{subfigure}{.45\linewidth}
    \centering
    \includegraphics[width=\linewidth]{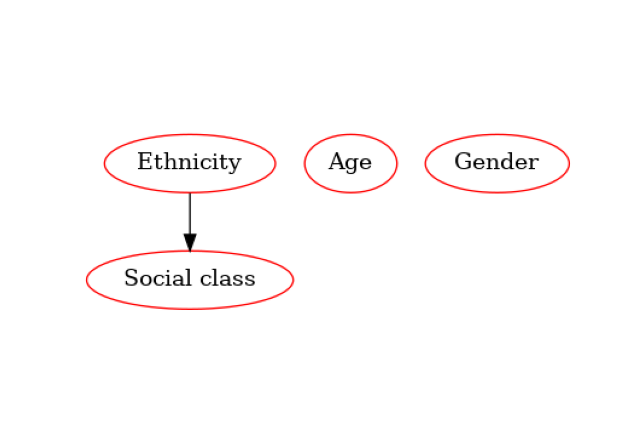}
    \caption{Structure 1: weak dependence.  \label{fig:BNstruct01}}
\end{subfigure}
\begin{subfigure}{.45\linewidth}
    \centering
    \includegraphics[width=\linewidth]{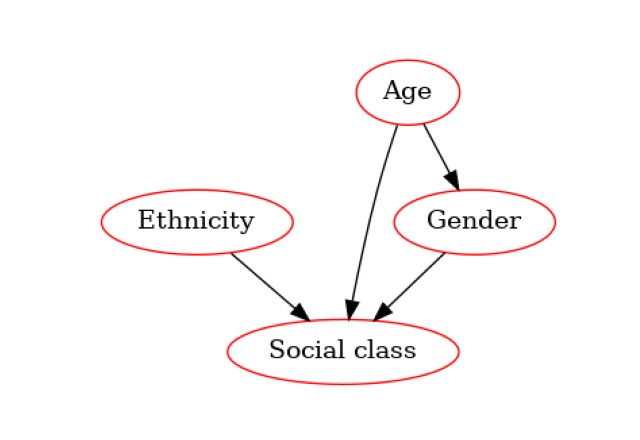}
    \caption{Structure 2: strong dependence.  \label{fig:BNstruct02}}
\end{subfigure}
\caption{Bayesian network structures for the Census Income dataset. \label{fig:BN}}
\end{figure}
\begin{figure}[t]
    \centering
    \includegraphics[width=\linewidth]{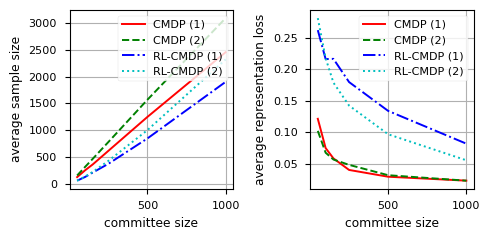}
    \caption{Effect of committee size $K$ on sample complexity and representation loss for \statioalg (\textbf{known} $p$) and \optalg (\textbf{unknown} $p$), on the two different Bayesian networks (1) and (2) fitted on the Census Income dataset. \label{fig:adult-K-plots}}
\end{figure}

Figure \ref{fig:adult-K-plots} shows that both when $p$ is known, the sample complexity is higher when there is more dependence (Bayesian network (2)) between features, but the representation loss is the same. When $p$ is unknown, the representation loss is lower for structure (2) with more dependence, than structure (1) with little dependence, but the sample size is higher for (2). For structure (2), the representation loss is low ($\approx 0.07$) for $K=1000$. Importantly, it implies that in practice, the representation loss is much lower than the worst case bound given by Theorem \ref{thm:regret}.
\section{Detailed example for Section 4}\label{appendix:cmdp}

We take the same attributes and same distribution as in Table \ref{tab:example-dist}, with $\epsilon' = \nicefrac16$:

\begin{table}[]
\centering
\begin{tabular}{|l|ll|}
\hline
gender \textbackslash{} age & S & J \\
\hline
M & $\nicefrac13$ & $\nicefrac14$ \\
F & $\nicefrac14$ & $\nicefrac16$\\
\hline
\end{tabular}
\label{tab:example-cmdp}
\end{table}

The target vectors are $\rho^{\text{gender}} = (\nicefrac12,\nicefrac12)$ and $\rho^{\text{age}} = (\nicefrac12,\nicefrac12)$, that is, an ideal committee contains as many women as men and as many senior than junior. 

We solve the linear program

\begin{equation}\label{eq:ex-CMDP}
\begin{aligned}
\max \quad & \frac{\pi(MS,1)}{3} + \frac{\pi(FS, 1)}{4} + \frac{\mu(MJ,1)}{4} + \frac{\mu(FJ, 1)}{6} \\ \\
\text{u.c.} \quad & \frac{\pi(MS,1)}{3} + \frac{\pi(FS, 1)}{4} = \frac{\mu(MJ,1)}{4} + \frac{\mu(FJ, 1)}{6} \\
& \frac{\pi(MS,1)}{3} + \frac{\mu(MJ,1)}{4} = \frac{\pi(FS, 1)}{4} + \frac{\mu(FJ, 1)}{6} \\
\end{aligned}
\end{equation}

Its solution is

\begin{equation}\label{eq:ex-CMDP4}
\begin{aligned}
\pi^*(MS,1) = \nicefrac12\\
\pi^*(FJ,1) = 1\\
\pi^*(MJ,1) = 1\\
\pi^*(FS,1) = 1
\end{aligned}
\end{equation}

Thus, each time the current volunteer is a senior male, we select him with probability \nicefrac12; all other volunteers are selected with probability 1. The expected final composition of the pool is 30\% of junior male, 30\% of senior female, 20\% of junior female and 20\% of senior male. As the policy selects in average \nicefrac56 of the volunteers, the expected time until we select $K$ candidates is $\mathbb{E}^{p,\pi^*}[\tau] = (\nicefrac65) K$.

\end{document}